\theoremstyle{plain}
\newtheorem{theorem}{Theorem}[section]
\newtheorem{lemma}[theorem]{Lemma}
\theoremstyle{definition}
\newtheorem{assumption}[theorem]{Assumption}
\theoremstyle{remark}
\theoremstyle{plain}
\theoremstyle{remark}
\DeclareMathOperator*{\argmin}{arg\,min}
\newcommand{\X}{X} 
\newcommand{\XTilde}{\widetilde{\X}}
\newcommand{\x}{x}
\newcommand{\xTilde}{\widetilde{\x}}
\global\long\def\esp{\mathbb{E}}%
\global\long\def\P{\mathbb{P}}%
\newcommand{\etaStar}{\eta^\star}
\newcommand{\etaStarTilde}{\etaStar}
\newcommand{\g}{g}
\newcommand{\gStar}{\g^\star}
\newcommand{\gStarTilde}{\gStar}
\newcommand{\I}{\mathds{1}}
\icmltitlerunning{When Pattern-by-Pattern Works: Theoretical and Empirical Insights for Logistic Models with Missing Values}
\begin{document}

\twocolumn[
  \icmltitle{When Pattern-by-Pattern Works: Theoretical and Empirical Insights for Logistic Models with Missing Values}



  \icmlsetsymbol{equal}{*}

  \begin{icmlauthorlist}
    \icmlauthor{Christophe Muller}{ox}
    \icmlauthor{Erwan Scornet}{sorbonne}
    \icmlauthor{Julie Josse}{inria}
  \end{icmlauthorlist}

  \icmlaffiliation{ox}{Department of Statistics, University of Oxford, Oxford, United Kingdom}
  \icmlaffiliation{inria}{INRIA PreMeDICaL, Institut Desbrest d’Épidémiologie et de Santé Publique (Idesp), University of Montpellier, Montpellier, France}
  \icmlaffiliation{sorbonne}{Sorbonne Université, Université Paris Cité, CNRS, Laboratoire de Probabilités, Statistique et Modélisation (LPSM), F-75005 Paris, France}

  \icmlcorrespondingauthor{Christophe Muller}{christophe.muller@stats.ox.ac.uk}

  \icmlkeywords{Machine Learning, ICML, Missing values, MCAR, MAR, MNAR, Logistic Models, Consistency}

  \vskip 0.3in
]

\printAffiliationsAndNotice{}

\begin{abstract}
    Predicting with missing inputs challenges even parametric models, as parameter estimation alone is insufficient for prediction on incomplete data. While several works study prediction in linear models, we focus on logistic models, where optimal predictors lack closed-form expressions.
    We prove that a Pattern-by-Pattern strategy (PbP), which learns one logistic model per missingness pattern, accurately approximates Bayes probabilities under a Gaussian Pattern Mixture Model (GPMM). Crucially, this result holds across standard missing data scenarios (MCAR and MAR) and, notably, in Missing Not at Random (MNAR) settings where standard methods often fail.
    Empirically, we compare PbP against imputation and EM methods across classification, probability estimation, calibration, and inference.  Our analysis provides a comprehensive view of logistic regression with missing values. It reveals that mean imputation can be used as baseline for low sample sizes and PbP for large sample sizes, as both methods are fast to train and may have good performances in some settings. The best performances are achieved by non-linear multiple iterative imputation techniques that include the response label (Random Forest MICE with response), which are more computationally expensive.
\end{abstract}

\section{Introduction}
Missing data is a common challenge in supervised learning tasks, where the goal is to predict an outcome variable based on a set of input features. In real-world datasets, missing values often arise due to various reasons such as measurement errors, data corruption, or non-response in surveys. Ignoring missing values can lead to biased models and reduced predictive accuracy, while improper handling can result in misleading conclusions.

Missing data are commonly classified into three categories \citep{rubin1976inference} based on how missingness depends on the data: Missing Completely at Random (MCAR),  Missing at Random (MAR) or Missing Not at Random (MNAR). Building on these definitions, a vast literature focuses on the estimation of model parameters and their distribution in the presence of missing values   \citep[see, e.g.][]{inference2, inference1, inference3}. For instance, \citet{little1992regression,Jones1996IndicatorAS,robins1994estimation} provide methods for parameter estimation in linear models. 

Logistic regression is one of the most used binary classification methods, which often serves as a baseline for this type of problems \citep[see, e.g.,][]{hosmer2013applied}. 
Unfortunately, there exist only a few methods to estimate the coefficients of logistic models in the presence of missing values. This is partly related to the fact that no closed-form expression exists for the coefficients, contrary to linear models with Gaussian features \citep[see, e.g.,][]{le2020neumiss}. 
A naive method to handle missing values is the \textit{complete case}, which consists of applying a logistic regression on complete samples only. This approach cannot be used to predict on inputs with missing values as estimating logistic coefficients is not enough to predict on data with missing values.

The most prevalent strategy consists of imputing the missing values in a first step, and then applying a logistic regression on the imputed dataset. \textit{Constant imputation} is the simplest approach, which replaces missing values by the mean/median/mode or other statistics. However, \citet{lobo2024primerlinearclassificationmissing} proved that constant imputation in a logistic model is not Bayes optimal (i.e., leads to inconsistent estimates). 
Other more powerful imputations can be applied before the logistic regression, such as the Multivariate Imputation by Chained Equations \citep[MICE,][]{van2011mice} which models and imputes iteratively each variable conditionally on the others. MICE is a widely used and flexible imputation method which can handle complex relationships in the data. This makes it a powerful precursor to downstream tasks like the logistic regression.
\textit{Stochastic Approximation Expectation Maximization (SAEM)} algorithm, introduced by \citet{jiang2020logistic}, is a generative model which assumes a logistic regression with Gaussian covariates and infers its parameters via a stochastic EM scheme. This assumption may be restrictive in practice. 

An additional difficulty of predicting in a linear/logistic model comes from the variety of possible models per missing pattern. Assuming a linear model on the complete data does not result in linear models for all missing patterns: it depends on the type of links between input components. For linear models, \citet{le2020neumiss, ayme2022near} established sufficient conditions on input data and missing mechanisms for each model on each missing pattern to be linear. Under such assumptions, a viable strategy consists in building one linear model per missingness pattern, the so-called Pattern-by-Pattern (PbP) strategy. Unfortunately, such assumptions have not yet been found for logistic models. More dramatically, \citet{lobo2024primerlinearclassificationmissing} proved that it is impossible to obtain a logistic submodel from a logistic model on complete data (under MCAR and independent inputs). 
This misspecification therefore suggests that the PbP strategy is inconsistent. However, we observe good predictive performance of PbP in practice therefore motivating the central question of our work: in which settings can PbP provide an effective predictive strategy? 

\paragraph{Contributions} 
We consider Gaussian Pattern Mixture Models (GPMM), where data within each missingness pattern follows a distinct multivariate Gaussian distribution.
Under the GPMM assumption, we show that a Probit model on the complete inputs leads to Probit models on each missing pattern. This result is the first to exhibit a classification model (Probit), which remains well-specified on each missing data pattern. 
Under the same assumptions but replacing the Probit model by a logistic model, we show that PbP logistic regression can closely approximate the Bayes classifier.
This is our main theoretical contribution, which explains why PbP logistic regression may work well in practice. This result holds for some MCAR, MAR and MNAR settings. We further show that real-world missingness is often highly concentrated in a few patterns, mitigating the ``curse of dimensionality'' and ensuring the practical feasibility of PbP.
Then, we conduct a comprehensive empirical comparison across diverse scenarios and rigorously evaluate them using four complementary metrics (classification, probability estimation, calibration, and parameter inference). Our analysis reveals that mean imputation can be used as baseline for low sample sizes. Improved performance can be obtained for small sample sizes using a nonlinear multiple iterative imputation technique that incorporates the labels. For large sample sizes, \texttt{PbP} is the most promising method for GPMM, but we recommend the nonlinear iterative technique in the presence of non-linear features. More details can be found in the experimental section. 

\paragraph{Outline}  
In Section~\ref{sec:pbp_def}, we formalize the problem, define the missingness setting, and introduce the Pattern-by-Pattern (PbP) procedure. In Section~\ref{sec:theory}, we develop our theoretical results on PbP Probit and logistic models. Section~\ref{sec:simu_method} details the simulation setup and evaluation metrics, while results are presented in Sections~\ref{sec:simu_results}. Finally, \Cref{sec:real_data} addresses the curse of dimensionality of PbP and presents empirical results on real datasets.

\paragraph{Related work}
\citet{josse2024consistency}  showed that, under MAR, consistent predictors can be obtained either by imputing missing test values via multiple imputation or by applying constant imputation before training, provided the learner is universally consistent (e.g., kernel methods, nearest neighbors). This result was further extended by \citet{morvan2021s} to any type of missing data. 
\citet{le2020linear} examined what appeared to be a simple case: linear regression with missing data. Their analysis revealed surprising complexity, proving that even when the data-generating process is linear, the optimal predictor may become non-linear in the presence of missing values. This fundamental result highlights that missing values fundamentally change the nature of the prediction problem.
Logistic regression introduces additional challenges. While PbP is a valid approach in linear models with independent MCAR Gaussian inputs \citep{le2020neumiss}, this is not the case for logistic models \citep{lobo2024primerlinearclassificationmissing}.
This result underscores a key insight: even in seemingly simple settings, standard linear methods fail to capture the true conditional probability structure when data is missing. In high dimensions with MCAR data, 
\citet{verchand2024highdimensionallogisticregressionmissing} show that imputation by zero followed by Ridge-regularized logistic regression can attain Bayes performance in classification.

\section{Problem Setting}
\label{sec:pbp_def}

In this paper, we analyze the problem of binary classification in the presence of missing data. We thus consider two random variables $X$ and $Y$, where $X = (X_1, \hdots, X_d) \in \mathds{R}^d$ 
is the input vector and  $Y \in \{0,1\}$ the binary response. 

\paragraph{Missing data} We assume that some components of $X$ may be missing and introduce a missingness indicator vector $M \in \{0,1\}^d$, such that, for all $1 \leq j \leq d$,
\begin{equation} 
M_j = 
\begin{cases} 
    1 & \text{if } X_j \text{ is not observed (\texttt{NA})} \\
    0 & \text{if } X_j \text{ is observed} 
\end{cases} 
\end{equation} 
We define the incomplete random vector $\widetilde{X} \in (\mathds{R} \cup \{\texttt{NA}\})^d$ as $\widetilde{X}_j = X_j$ if $M_j = 0$ and $\widetilde{X}_j = \texttt{NA}$ otherwise. For all $J \subset \{1, \hdots, d\}$, we let $X_J$ be the subvector of $X$ whose components are indexed by $J$. To refer to the observed and missing components of $X$, we define the indices of observed entries $obs(m) = \{j \in \{1,\dots,d\} : m_j = 0\}$ and that of missing entries  $mis(m) = \{j \in \{1,\dots,d\} : m_j = 1\}$. Consequently, $X_{obs(m)}$ is the observed part of $X$ given $M=m$ and $X_{mis(m)}$ is the missing part of $X$ given $M=m$.

\paragraph{Supervised learning }

In a binary classification problem with complete input, we want to find the minimizer of $g \mapsto \esp[\I_{Y \neq g(X)}]$, called the Bayes predictor (for the $0-1$ loss) given by $x \mapsto \I_{\esp[Y|X=x] \geq 0.5}$. In presence of incomplete input, we are interested instead in solving $\gStarTilde \in \argmin_g \esp[\I_{Y \neq g(\XTilde)}],$
whose solution is given by $\gStarTilde(\xTilde) = \mathds{1}_{\etaStarTilde(\xTilde) \geq 0.5}$, where  $\etaStarTilde(\xTilde) = \esp[Y |\XTilde = \xTilde]$.
Note that the Bayes probability $\etaStarTilde$ is interesting on its own as it provides richer insight—like the probability of a disease in medical diagnosis—enabling more nuanced decision-making.

\paragraph{Pattern-by-Pattern approaches} The Bayes probability $\etaStarTilde(\widetilde{X})$ can be decomposed by missing patterns:
\begin{equation}
\label{eq:bayes_prob}
    \etaStarTilde(\XTilde) = \sum_{m \in \{0,1\}^d} \eta^\star_m(X_{obs(m)}) \cdot \mathds{1}(M=m)
\end{equation}
where $\eta^\star_m(X_{obs(m)}) = \esp[Y|X_{obs(m)},M=m]$ is the Bayes probability on missing pattern $m$.
The Pattern-by-Pattern (PbP) strategy leverages this decomposition by fitting, for each missing pattern $m \in \{0,1\}^d$, a supervised learning model (e.g. logistic regression) on the sub-sample of observations with missing pattern $M=m$. Letting $\widehat{\eta}_m$ be the resulting predictor on pattern $m$, the overall PbP estimator is defined by
\begin{equation}
\label{eq:h_pbp}
    \widehat{\eta}^{\mathrm{PbP}}(\XTilde) = \sum_{m \in \{0,1\}^d} \widehat{\eta}_m(X_{obs(m)}) \cdot \I_{M = m}.
\end{equation}

\section{Convergence Properties of the PbP Estimator}
\label{sec:theory}

\subsection{Well-Specified Case: The Probit Model}
Generalized linear models \citep[][]{nelder1972generalized} are a wide class of parametric models which can be used to model a variety of outputs (binary, continuous, counts...) via a variety of methods. They include the well-known linear and logistic models and  the Probit model, which we study below. 

\begin{assumption}[Probit model]\label{ass:probit_model_complete}
Let $\Phi(t) = (2\pi)^{-1/2} \int_{-\infty}^t  e^{-t^2/2} \textrm{d}t$. There exist $\beta_0^\star, \hdots, \beta_d^\star \in \mathds{R}$ such that
the distribution of $Y$ given the complete input $X$ satisfies $\mathds{P}[Y=1|X] = \Phi ( \beta_0^\star + \sum_{j=1}^d \beta_j^\star X_j)$.
\end{assumption}

We also need to make some assumptions on the missing data mechanism: data follow the Gaussian Pattern Mixture Model (GPMM) described below. 

\begin{assumption}[GPMM]\label{ass:mcar}
For all $m \in \{0,1\}^d$, there exist $\mu_m, \Sigma_m$ such that $X|M=m \sim \mathcal{N}(\mu_m, \Sigma_m)$.
\end{assumption}

GPMM encompasses some MCAR, MAR, and MNAR mechanisms \citep[see, e.g.,][]{ayme2022near}. In particular, if all $\mu_m$ and all $\Sigma_m$ are equal, we obtain a MCAR mechanism with Gaussian inputs. For any set $S$, we denote by $|S|$ its cardinality.

\begin{theorem}
\label{th:probit_model}
Grant \Cref{ass:probit_model_complete} and \ref{ass:mcar}. Then, for all $m \in \{0,1\}^d$, 
the Bayes probabilities on pattern $m$ satisfies, for all $x \in \mathds{R}^{|obs(m)|}$,
\begin{align*}
   \eta^\star_m(x)= \Phi\left( \frac{ \alpha_{0,m}  +  \alpha_m^\top x  }{\sqrt{1+  \tilde{\sigma}^2_m}}\right),
\end{align*}
where, letting $O_m = \Sigma_{m, obs(m), obs(m)}$,
\begin{align*}
     \alpha_{0,m}  & = \beta_0^\star + (\beta^{\star}_{mis(m)})^\top\mu_{m, mis(m)}\\&  \quad  - (\beta^{\star}_{mis(m)})^\top \Sigma_{m, mis(m), obs(m)} O_m^{-1}  \mu_{m, obs(m)}\\
     \alpha_m & = \beta_{obs(m)}^\star +  O_m^{-1} \Sigma_{m, obs(m), mis(m)} \beta^{\star}_{mis(m)} \\
     \tilde{\sigma}^2_m &  = (\beta^{\star}_{mis(m)})^\top \widetilde{\Sigma}_m \beta^{\star}_{mis(m)} \\
     \widetilde{\Sigma}_m &  = \Sigma_{m, mis(m), mis(m)} \\
    & \quad - \Sigma_{m, mis(m), obs(m)} O_m^{-1}  \Sigma_{m, obs(m), mis(m)}.
\end{align*}

\end{theorem}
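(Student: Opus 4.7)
The strategy is to integrate out the missing coordinates directly, exploiting the Gaussian conditioning provided by the GPMM together with the classical closed form for the expectation of a Probit under a Gaussian argument. Throughout I will use the implicit assumption $Y \perp M \mid X$ (which is how \Cref{ass:probit_model_complete} is read: only $\mathbb{P}[Y=1\mid X]$ is prescribed, and it is understood not to depend on $M$). With this convention, the tower property gives
\begin{align*}
\eta^\star_m(x) &= \esp[Y \mid X_{obs(m)}=x, M=m] \\
 &= \esp[\,\Phi(\beta_0^\star + (\beta^\star)^\top X) \mid X_{obs(m)}=x, M=m\,].
\end{align*}

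\textbf{Step 1: Gaussian conditional law.} Under \Cref{ass:mcar}, $X \mid M=m$ is Gaussian, so the conditional distribution of $X_{mis(m)}$ given $X_{obs(m)}=x$ and $M=m$ is $\mathcal{N}(\nu_m(x), \widetilde{\Sigma}_m)$ with
\[
\nu_m(x) = \mu_{m,mis(m)} + \Sigma_{m,mis(m),obs(m)}\Sigma_{m,obs(m),obs(m)}^{-1}(x-\mu_{m,obs(m)}),
\]
and $\widetilde{\Sigma}_m$ the Schur complement displayed in the theorem. Splitting $(\beta^\star)^\top X = (\beta^\star_{obs(m)})^\top x + (\beta^\star_{mis(m)})^\top X_{mis(m)}$, the random variable $W := \beta_0^\star + (\beta^\star)^\top X$, conditionally on $(X_{obs(m)}, M) = (x, m)$, is Gaussian with mean $\mu_W(x) := \beta_0^\star + (\beta^\star_{obs(m)})^\top x + (\beta^\star_{mis(m)})^\top \nu_m(x)$ and variance exactly $\tilde\sigma_m^2 = (\beta^\star_{mis(m)})^\top \widetilde{\Sigma}_m \beta^\star_{mis(m)}$, which does not depend on $x$.

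\textbf{Step 2: Probit–Gaussian identity.} Apply the classical identity $\esp[\Phi(Z)] = \Phi(\mu/\sqrt{1+\sigma^2})$ for $Z \sim \mathcal{N}(\mu, \sigma^2)$, which is obtained in one line by introducing an independent $U \sim \mathcal{N}(0,1)$ and writing $\esp[\Phi(Z)] = \mathbb{P}[U-Z \le 0]$, with $U-Z \sim \mathcal{N}(-\mu, 1+\sigma^2)$. This yields immediately
\[
\eta^\star_m(x) = \Phi\!\left(\frac{\mu_W(x)}{\sqrt{1+\tilde\sigma_m^2}}\right).
\]

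\textbf{Step 3: Identification of $\alpha_{0,m}$ and $\alpha_m$.} It remains to expand $\mu_W(x)$ and group the constant and the linear-in-$x$ terms. Collecting the pieces that do not depend on $x$ gives precisely $\alpha_{0,m}$. For the linear part, the only subtlety is to rewrite
\[
(\beta^\star_{mis(m)})^\top \Sigma_{m,mis(m),obs(m)}\Sigma_{m,obs(m),obs(m)}^{-1} x
\]
as $x^\top \Sigma_{m,obs(m),obs(m)}^{-1}\Sigma_{m,obs(m),mis(m)}\beta^\star_{mis(m)}$, using symmetry of $\Sigma_{m,obs(m),obs(m)}^{-1}$ and $\Sigma_{m,mis(m),obs(m)}^{\top}=\Sigma_{m,obs(m),mis(m)}$. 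Adding the term $(\beta^\star_{obs(m)})^\top x$ yields exactly $\alpha_m^\top x$.

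\textbf{Main obstacle.} There is no deep difficulty in this proof: everything follows from Gaussian conditioning plus the Probit–Gaussian identity. The only care needed is (i) to make the $Y\perp M\mid X$ convention explicit so that $\mathbb{P}[Y=1\mid X, M]=\mathbb{P}[Y=1\mid X]$, and (ii) the bookkeeping in Step~3 to match the paper's exact expressions for $\alpha_{0,m}$ and $\alpha_m$. The global assumption $X\sim\mathcal{N}(\mu,\Sigma)$ stated in the theorem plays no role in the computation above, which only uses the pattern-wise Gaussianity of \Cref{ass:mcar}; it is presumably retained for consistency with the surrounding discussion.
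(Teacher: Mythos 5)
Your proposal is correct and follows essentially the same route as the paper's proof: tower property with $Y\perp M\mid X$, Gaussian conditioning of $X_{mis(m)}$ given $X_{obs(m)}$ and $M=m$, the identity $\esp[\Phi(Z)]=\Phi(\mu/\sqrt{1+\sigma^2})$ for $Z\sim\mathcal{N}(\mu,\sigma^2)$, and coefficient bookkeeping. The only (minor, and arguably cleaner) difference is that you derive this identity probabilistically via an independent latent $U\sim\mathcal{N}(0,1)$ and $\esp[\Phi(Z)]=\P[U-Z\le 0]$, whereas the paper's \Cref{lem:probit_integral} proves it by a direct double-integral computation with completion of the square; your observation that the global assumption $X\sim\mathcal{N}(\mu,\Sigma)$ is not actually used also matches the paper's proof, which relies only on the pattern-wise Gaussianity of \Cref{ass:mcar}.
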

The proof of \Cref{th:probit_model} can be found in \Cref{app:proofs_1}.
According to \Cref{th:probit_model}, if we assume that the data follow a GPMM with a Probit model, then each Pattern-by-Pattern predictor follows a Probit model. Therefore, under these assumptions, estimating each Bayes predictor via a Probit model leads to consistent estimators. 

Inspection of the proof of  \Cref{th:probit_model} reveals that 
\begin{align}
\tilde{\sigma}^2_m  & = \textrm{Var}[ X_{mis(m)}^\top \beta^{\star}_{mis(m)}  | X_{obs(m)} = x ]\\
    \alpha_{0,m} + \alpha_m^\top x  & = \mathds{E}[ \beta_0^\star + \sum_{j=1}^d \beta_j^\star X_j | X_{obs(m)}=x].\end{align}
The properties of the function $\Phi$, combined with the Gaussian nature of the input allows each PbP Bayes predictor (which are  conditional expectations of the function $\Phi$ given $X_{obs(m)}$) to be written as the function $\Phi$ applied to conditional expectations. Thus, it is not surprising that the expressions for $\alpha_{0,m}$ and $\alpha_m$ are the same as those obtained in linear models with missing values. More precisely, grant \Cref{ass:probit_model_complete} replacing $\Phi$ by the identity function, thus leading to a linear model. Then, \citet{le2020neumiss} and \citet{ayme2022near} prove that 
\begin{align}
   \mathds{E}[Y | X_{obs(m)} = x, M=m]  =   \alpha_{0,m}  +  \alpha_m^\top x.
\end{align}
The non-linearity of the function $\Phi$ (inherent to the binary nature of $Y$) introduces a renormalizing factor $(1 + \tilde{\sigma}^2_m)^{-1}$. This corrective factor depends only on $\beta^{\star}$ and the covariance matrices $\Sigma_m$. 

In order to interpret PbP predictor in \Cref{th:probit_model}, let us consider the simple case where all components of $X$ are independent, that is $\Sigma = \textrm{diag}(\sigma_1^2, \hdots, \sigma_d^2)$. In this case, we have $ \alpha_{0,m}  = \beta_0^\star + (\beta^{\star}_{mis(m)})^\top\mu_{mis(m)}$, $\alpha_m = \beta_{obs(m)}^\star$ and  
\begin{align*}
    \tilde{\sigma}^2_m &  = \sum_{j \in mis(m)} (\sigma_j \beta^{\star}_{j, mis(m)})^2.
\end{align*}
Thus the corrective factor $\tilde{\sigma}^2_m$ is all the more important for missing patterns that involve a high number of missing components, which are important in the predictive model (large $\beta^{\star}$) or with a large variance $\sigma_j^2$. Obviously, if for a given pattern $m$, all missing components are not involved in the predictive model ($\beta^{\star}_{mis(m)}=0$) then we retrieve the original Probit model with $\alpha_{0,m} = \beta_0^{\star}$ and $\alpha_m = \beta^{\star}_{obs(m)}$.

Note that if no relation is known between the different elements $(\Sigma_m, \mu_m)_{m \in \{0,1\}^d}$, one may be forced to estimate all these parameters to obtain all PbP predictors. 
In such settings, though computationally costly, PbP strategies appear to be among the few consistent strategies. 

\subsection{Logistic with Gaussian Covariates approximates Bayes predictor}
\label{sec:theoretical_approx_gaussian_covariates_pbp}

\begin{assumption}[Logistic model]\label{ass:logistic_model_complete}
Let $\sigma (t)= 1/(1 + e^{-t})$. There exist $\beta_0^\star, \hdots, \beta_d^\star \in \mathds{R}$ such that
the distribution of $Y$ given the complete input $X$ satisfies $\mathds{P}[Y=1|X] = \sigma ( \beta_0^\star + \sum_{j=1}^d \beta_j^\star X_j)$.
\end{assumption}

Under a MCAR model, \Cref{ass:logistic_model_complete} and Gaussian independent covariates, \citet{lobo2024primerlinearclassificationmissing} prove that the Bayes predictor on each missing pattern is not a logistic model. However, \Cref{th:approx_logistic} below illustrates that PbP logistic regression is a good approximation of the Bayes predictor when input data are Gaussian.

\begin{theorem}
\label{th:approx_logistic}
Grant \Cref{ass:mcar} and \ref{ass:logistic_model_complete}. Then, for all $m \in \{0,1\}^d$, 
the Bayes predictor on pattern $m$ satisfies, for all $x \in \mathds{R}^{|obs(m)|}$,
\begin{align*}
    & \left| 
    \eta^\star_m(x) - \sigma\left( \frac{ \alpha_{0,m}  +  \alpha_m^\top x  }{\sqrt{1+ (\pi/8) \tilde{\sigma}^2_m}}\right) \right| \leq 2 \|\varepsilon\|_{\infty},
\end{align*}
where $\varepsilon(t) = \Phi(t) - \sigma(t \sqrt{8/\pi})$, and the expression of $\alpha_{0,m}, \alpha_m, \tilde{\sigma}^2_m$ and $\widetilde{\Sigma}$ are given in \Cref{th:probit_model}.
\end{theorem}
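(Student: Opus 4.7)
The strategy is to replace the logistic link by a rescaled Probit link using the classical approximation $\sigma(u) \approx \Phi(u\sqrt{\pi/8})$, apply \Cref{th:probit_model} to the resulting (well-specified) Probit model under GPMM, and then go back to the logistic scale. The whole approximation error will be absorbed into two $\|\varepsilon\|_\infty$ terms, one for each switch between $\sigma$ and $\Phi$.

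\textbf{Step 1: linearize the link change.} By definition of $\varepsilon$, for every $u \in \mathds{R}$,
\begin{equation*}
    \sigma(u) = \Phi\!\left(u\sqrt{\pi/8}\right) - \varepsilon\!\left(u\sqrt{\pi/8}\right).
\end{equation*}
Setting $S = \beta_0^\star + \sum_{j=1}^d \beta_j^\star X_j$ and applying this to \Cref{ass:logistic_model_complete}, $\mathds{P}[Y=1|X] = \Phi(S\sqrt{\pi/8}) - \varepsilon(S\sqrt{\pi/8})$. Writing the rescaled coefficients $\tilde\beta_j^\star = \beta_j^\star\sqrt{\pi/8}$ and using the tower property,
\begin{equation*}
    \eta_m^\star(x) = \mathds{E}\bigl[\Phi(\tilde\beta_0^\star + \tilde\beta^{\star\top} X) \mid X_{obs(m)} = x, M = m\bigr] + R_1(x),
\end{equation*}
with $|R_1(x)| \leq \|\varepsilon\|_\infty$.

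\textbf{Step 2: apply the Probit theorem with rescaled coefficients.} The rescaled coefficients define a valid Probit model on the complete data, and \Cref{ass:mcar} is unchanged. Hence \Cref{th:probit_model} applies with $\beta^\star$ replaced by $\tilde\beta^\star$. Since the expressions of $\alpha_{0,m}$ and $\alpha_m$ are linear in $\beta^\star$, the Probit constants with rescaled coefficients become $\alpha_{0,m}\sqrt{\pi/8}$ and $\alpha_m\sqrt{\pi/8}$; since $\tilde\sigma_m^2$ is quadratic in $\beta^\star$, it becomes $(\pi/8)\tilde\sigma_m^2$. Therefore
\begin{equation*}
    \mathds{E}\bigl[\Phi(\tilde\beta_0^\star + \tilde\beta^{\star\top} X)\mid X_{obs(m)}=x, M=m\bigr] = \Phi\!\left(\frac{(\alpha_{0,m}+\alpha_m^\top x)\sqrt{\pi/8}}{\sqrt{1+(\pi/8)\tilde\sigma_m^2}}\right).
\end{equation*}

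\textbf{Step 3: convert back to the logistic scale and conclude.} Using Step~1 in reverse with $u = (\alpha_{0,m}+\alpha_m^\top x)/\sqrt{1+(\pi/8)\tilde\sigma_m^2}$, the Probit expression above equals $\sigma(u)$ up to an additive error $R_2(x)$ with $|R_2(x)| \leq \|\varepsilon\|_\infty$. Combining with $R_1$ and the triangle inequality yields $|\eta_m^\star(x) - \sigma(u)| \leq 2\|\varepsilon\|_\infty$, which is exactly the claimed bound.

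\textbf{Main obstacle.} No step is deep; the only real care is to verify that the substitution $\beta^\star \mapsto \beta^\star\sqrt{\pi/8}$ transforms the quantities from \Cref{th:probit_model} exactly as claimed, in particular that the quadratic form $\tilde\sigma_m^2$ picks up the factor $\pi/8$ (not $\sqrt{\pi/8}$) while $\alpha_{0,m}$ and $\alpha_m$ pick up only $\sqrt{\pi/8}$, so that the $\sqrt{\pi/8}$ factors in numerator and denominator cancel when one re-applies the Step~1 identity to produce $\sigma(\cdot)$. The rest is just the triangle inequality and two invocations of the uniform bound on $\varepsilon$.
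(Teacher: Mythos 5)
Your proof is correct and follows essentially the same route as the paper's: sandwich the logistic Bayes probability between two applications of the uniform bound on $\varepsilon$, with the Gaussian--Probit integral identity in the middle. The only (cosmetic) difference is that you invoke \Cref{th:probit_model} with rescaled coefficients $\beta^\star\sqrt{\pi/8}$ rather than applying \Cref{lem:probit_integral} directly as the paper does; this is legitimate since the identity $\mathds{E}[\Phi(b_0+b^\top X)\mid X_{obs(m)}=x, M=m]=\Phi(\cdot)$ established in that proof depends only on \Cref{ass:mcar} and not on the law of $Y$, and your scaling bookkeeping ($\sqrt{\pi/8}$ on $\alpha_{0,m},\alpha_m$, $\pi/8$ on $\tilde\sigma_m^2$, cancellation in the final $\sigma(\cdot)$) is exact.
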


The proof of \Cref{th:approx_logistic} can be found in \Cref{app:proofs_2}. \Cref{th:approx_logistic}  establishes that each PbP Bayes predictor is close to a logistic function, assuming that complete data follows a logistic model, with a GPMM. Indeed, the scaled sigmoid function $\sigma(t\sqrt{8/\pi})$ is close to the probit function: numerical simulations give $\|\varepsilon\|_{\infty} \simeq 0.018$. Thus, while the results of \citet{lobo2024primerlinearclassificationmissing} prove that, in our setting, each PbP Bayes predictor is not exactly logistic, \Cref{th:approx_logistic} highlight that each one is close to a logistic function, with an approximation error of at most $\|\varepsilon\|_{\infty} \simeq 0.018$.

\subsection{Illustration in a 2-Dimensional Setting}
\label{sec:illustration_th}

To illustrate our results, consider $X \sim \mathcal{N}((1.5, 0), \textrm{diag}(5, s^2))$. The binary outcome $Y$ follows the logistic model $\mathds{P}[Y=1|X]  = \sigma(X_1 + X_2)$.
Assume $X_1$ is always observed, while $X_2$ is always missing. One could be tempted to estimate probabilities by applying the logistic model for complete data, replacing  $X_2$ by its expectation. Due to the non-linearity of the sigmoid function, this gives poor results: the yellow curve in \Cref{fig:expF_theory} (left) is not a good approximation of the Bayes probabilities (pink curve). 
Besides, the Bayes probabilities do not exactly follow a logistic model: the blue dotted curve (best logistic approximation) departs from the pink curve for $x_1 \simeq \pm 8$ (see \Cref{fig:logit_normal_exponential} in Appendix for more details). This is in line with   
\citet{lobo2024primerlinearclassificationmissing}, and with our result (\Cref{th:approx_logistic}) stating that the Bayes probabilities are not equal but close to a logistic model. 

\begin{figure}[ht!]
\centering
\includegraphics[width=0.47\textwidth]{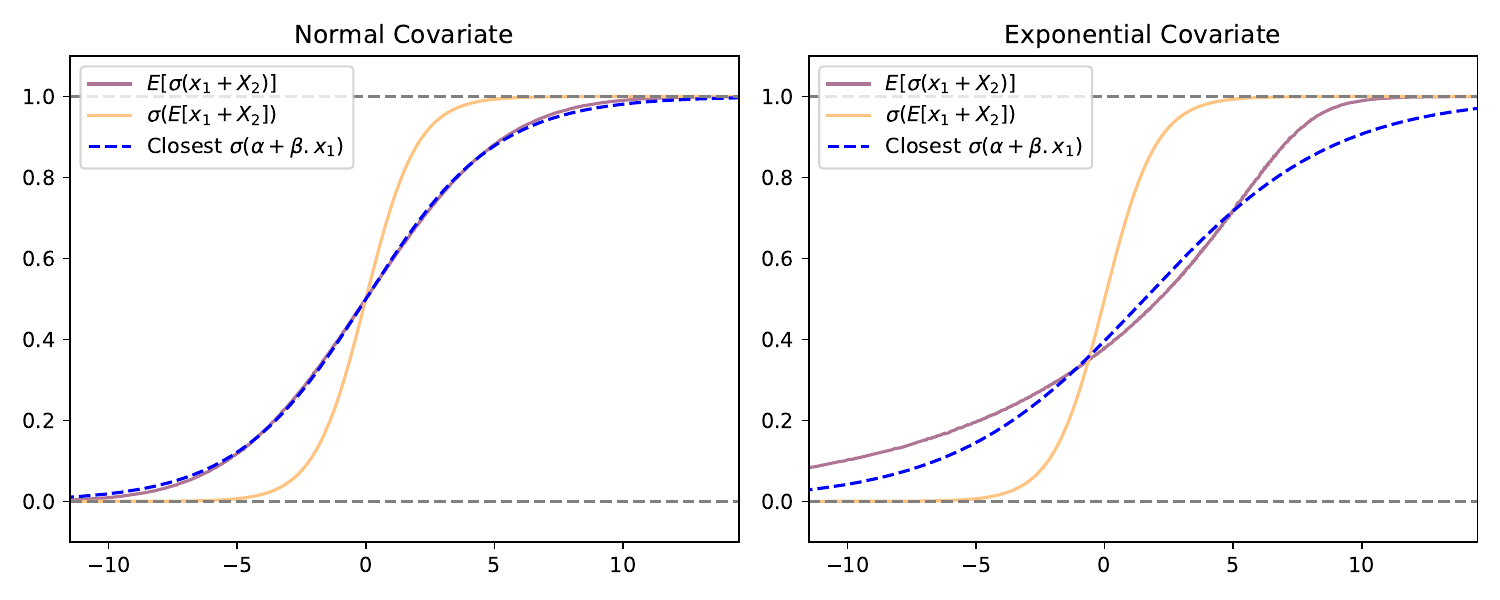}
\caption{Left: When $X_2 \sim \mathcal{N}(0,s^2)$, the best logistic approximation $\sigma(\alpha + \beta x_1)$ closely matches the true probabilities $\esp[\sigma(x_1 + X_2)]$, consistent with Theorem~\ref{th:approx_logistic}. Right: When $X_2 \sim \text{Exp}(\lambda)-\lambda$, the best logistic approximation deviates significantly from the true probabilities. $s^2 \approx 3.83$ and $\lambda \approx 7.63$ are chosen to maximize the deviation from logistic approximation.}
\label{fig:expF_theory}
\end{figure}

To test the role of normality, we repeat the analysis by generating $X_2$, independently of $X_1$ with $X_2 \sim \text{Exp}(\lambda)-\lambda$. As shown in \Cref{fig:expF_theory} (right), the approximation deteriorates, underscoring the importance of the Gaussian assumption in \Cref{th:approx_logistic}.

\section{Methods and Metrics}
\label{sec:simu_method}

\subsection{Procedures}

We outline several procedures to address missing values in the covariates. Unless specified otherwise, a logistic regression is trained on the imputed dataset. 

\textbf{Complete Case (CC)} consists in 
    excluding rows with missing values. 

\textbf{Constant Imputation (\textit{C}.IMP)} substitutes all missing values with a fixed constant \textit{C} (e.g., imputation by $0.5$ is \texttt{05.IMP}). A particular case is mean imputation (\texttt{Mean.IMP}) where any missing value is replaced by its feature mean.  

\textbf{$K$ Imputations by MICE (MICE.\textit{K}.IMP)} employs iterative regression models to  impute missing values for each feature, conditioned on the others. It is implemented using the \texttt{MICE} package in $\mathbf{R}$ \citep{van2011mice}, with its default predictive mean matching method (PMM), and a number of multiple imputed datasets equal to $K$. Logistic regressions applied on each imputed dataset produce parameter estimators $\hat{\beta}^{(k)}$ and probability estimators $\widehat{\eta}^{(k)}(x)$, for all $k \in \{1, \hdots, K\}$, and all $x \in \mathbb{R}^d$. These estimators are then  aggregated using \citet{rubin2018multiple}'s rule to provide the final estimators
    \begin{equation}
        \hat{\beta} = \frac{1}{K}\sum_{k=1}^K \hat{\beta}^{(k)} \quad \textrm{and} \quad 
        \widehat{\eta}(x) = \frac{1}{K}\sum_{k=1}^K \widehat{\eta}^{(k)}(x),
    \end{equation}
    A variant, \textbf{\texttt{MICE.\textit{K}.Y.IMP}}, incorporates the response variable ($Y$) as a covariate during the imputation of features ($X$) in the training phase \citep{d2024behind}. This allows $Y$ to inform the imputation of $X$. In the test phase, $Y$ is treated as unobserved, and its estimated probability is derived from the logistic regression on the imputed $X$ only. 
%
   We also consider variants where the inner distributional regressor is a random forest (RF). These methods are denoted \texttt{MICE.RF.\textit{K}.IMP} and \texttt{MICE.RF.\textit{K}.Y.IMP}.

 \textbf{Stochastic Approximation EM} \citep[SAEM,][]{jiang2020logistic} jointly models covariates and response variables assuming normality for covariates and a logistic response. Unlike imputation-then-regression approaches (like \textbf{\textit{C}.IMP} and \textbf{MICE.\textit{K}.IMP}), SAEM directly estimates the logistic regression parameters in the presence of missing data, performing the EM steps using stochastic approximation. The prediction step of SAEM is then based on a multiple imputation procedure using estimated parameters.

We used the \textit{misaem} R-package's implementation of the SAEM algorithm for our simulations. This package was updated as part of this work, and to ensure broader accessibility of the algorithm, we have also implemented it in Python and Julia\footnote{The Python and Julia packages are available at \url{https://pypi.org/project/misaem/} and \url{https://juliahub.com/ui/Packages/General/LogisticSAEM}, respectively.}.

\textbf{Pattern by Pattern (PbP)} fits a different logistic model on each missingness pattern, using the data of that specific pattern only. We observed that applying regularization yields stability gains in low-sample regimes. For clarity reasons, we choose to focus on standard (unregularized) logistic regression in this work.

We also consider variants of these procedures, where the missingness mask $M$ is incorporated as additional covariates of the logistic model, denoted by appending  $M$ to their names (e.g., \texttt{Mean.IMP.M}). Adding $M$ as an extra feature was shown to enhance predictive performance when the mask is informative \citep{van2023missing}. In the case of MICE imputations, we also consider incorporating the mask $M$ \emph{before} the imputation, denoted by \texttt{MICE.\textit{K}.M.IMP} and \texttt{MICE.\textit{K}.Y.M.IMP}.

Among these methods, \texttt{CC} is impractical for test sets with missing values, as it cannot generate either predictions or class probabilities. Constant and linear imputation methods are not Bayes optimal \citep[see][] {lobo2024primerlinearclassificationmissing}; they always lead to inconsistent procedures. 
 We expect similar issues to affect the \texttt{MICE.\textit{K}.IMP} procedure when \textit{K} is small, even if this more complex imputation is not explicitly covered by their proposition. On the contrary, we expect \texttt{SAEM} and \texttt{MICE.\textit{K}.IMP} for large $K$ to converge to the Bayes probabilities when the covariates are normally distributed, as their design resembles the true Bayes probabilities generation. A similar behavior is expected for \texttt{PbP} which produces probabilities close to the Bayes probabilities under the Gaussian assumptions (\Cref{th:approx_logistic}).

\subsection{Evaluation Methods}
\label{subsec:evaluation}

We evaluate each method from four complementary perspectives: classification, probability estimation, calibration, and inference, all described below. We evaluate the \textbf{classification} performance via the excess misclassification rate (percentage of incorrect prediction of the method minus that of the Bayes classifier). As probabilities may provide more nuanced information than the binary prediction, we compare the estimated probabilities (\textbf{Probability Estimation}) with the Bayes probabilities via the Mean Absolute Error (MAE, see  \Cref{app:howtocomputepstar}). A low MAE usually implies a low misclassification rate.  
\textbf{Calibration} measures the reliability of the predicted probabilities, specifically, whether a predicted probability $\eta$ corresponds to an actual outcome frequency $\mathbb{P}(Y=1 \mid \hat{\eta} = \eta) = \eta$. Following \citet{dimitriadis2021stable}, we focus on the \emph{Miscalibration (MCB)} component of their decomposition of the Brier score (see \Cref{sec:risk_measures_appendix}), and compute its difference with the MCB of the Bayes classifier. Finally,  we compute the \emph{Mean Squared Error} (MSE) between the estimated and true coefficients (\textbf{Inference}) where, to be fair, we exclude the intercept (including the mask in the prediction step leads to more than one constant coefficient). 

\section{Simulations}
\label{sec:simu_results}

In this section, we investigate the performance of the selected procedures on various simulated models\footnote{The code for all simulations and analyses presented in this paper is publicly available at \url{https://github.com/ChristopheMuller/logistic_with_NAs}.}. We denote $\mathcal{B}$ the Bernoulli distribution and $\mathcal{U}$ the uniform distribution. 

\subsection{Simulation protocol}
\label{sec:methodo_SimA}

For all experiments,  $Y$ follows a logistic model without intercept and with a parameter $\beta^\star$, drawn once for all experiments as $\mathcal{N}(\mathbf{0},I)$. 
Once we have set a distribution for the missing pattern $M$ (see below for details), we first analyze three different GPMM scenarios (\Cref{ass:mcar}).
 The first one is a MCAR model defined as $\Sigma_m = \Sigma$ and $\mu_m = \mu$.
The second variant is inspired by \citet[Example 2.3]{ayme2022near} where some features are always observed, and only their corresponding part of $\mu_m$ and $\Sigma_m$ varies from one pattern to another. This describes a MAR mechanism. Its results are close to the MCAR setting and discussed in Appendix \ref{app:gpmm_mar}.
In the third variant, $\mu_m$ and $\Sigma_m$ are completely different for each pattern: this corresponds to a MNAR setting. \texttt{PbP} is expected to perform well in all cases (see \Cref{sec:theory}). On the other hand, \texttt{SAEM} is expected to perform well only in the first two scenarios (MCAR and MAR) but not in the last one, since it is designed to handle MAR data.

To assess the generalizability of our results, we also consider a fourth experiment with non-Gaussian inputs and MCAR missingness. In this setting, we expect SAEM to perform poorly (as it was designed for Gaussian inputs), similarly to MICE with PMM (which uses linear relations to impute). We also want to assess how \texttt{PbP} behaves in this framework which falls outside that of \Cref{th:approx_logistic}. Finally, we combine the complexity of non-linear covariates with MNAR missingness in a fifth experiment. Its setup and results are discussed in \Cref{app:non_linear_MNAR}. The results are close to the ones of the non-linear MCAR setting, their discussion is thus postponed to \Cref{app:non_linear_MNAR}.

In summary, we will consider in the next sections the following three scenarios, in which the feature dimension is set to $d=5$.

\begin{itemize}
    \item \textbf{(MCAR)} For all $m$, $\mu_m = \mathbf{0}$ and $\Sigma_m =   [\rho^{|i-j|}]_{i,j=1}^d$ with $\rho = 0.65$ (Toeplitz covariance structure).
    All components $M_j$  of the missingness mask are distributed as  independent Bernoulli with parameter $ \P[M_j=1] = 0.25$, and then adjusted via resampling to obtain $\mathds{P}[M = \mathbf{1}] = 0$.

    \item \textbf{(MNAR)}
    For each $m$, we let $\Sigma_m = \sigma_m[\rho_m^{|i-j|}]_{i,j=1}^5$ and we sample $\rho_{m}\sim \mathcal{U}([-1,1])$, $\sigma_m \sim \mathcal{U}([0,1])$ and $\mu_{m}\sim \mathcal{N}(0,0.5I_5)$. The mask $M$ is generated as in the MCAR scenario.  

    \item \textbf{(Non-linear)} We generate $(\X, M)$ as in the MCAR setting, with $\rho = 0.95$. We then apply specific non-linear transformations to $X$ to create the final input $Z$, where $Z_1=X_1, Z_2=X_2$ and for all $j \in \{3, 4, 5\}$, $Z_j = g_j(X_j)$ where $g_j$ are non-linear invertible functions (see \Cref{sec:app_nonlinearfeatures} for details). Thus, we have access to observations distributed as $(Z, M, Y)$, where $Y$ is generated as a logistic model based on $Z$.
    
\end{itemize}

\paragraph{Training and Evaluation} We conduct experiments with varying training set sizes $n \in \{500, 1.000, 5.000, 15.000, 50.000\}$ and a test set of size $15.000$. For each training set size, we create 10 datasets $(X_i, M_i, Y_i)$ via the protocol described above.

\begin{figure*}[ht!]
\centering
\includegraphics[width=0.99\textwidth]{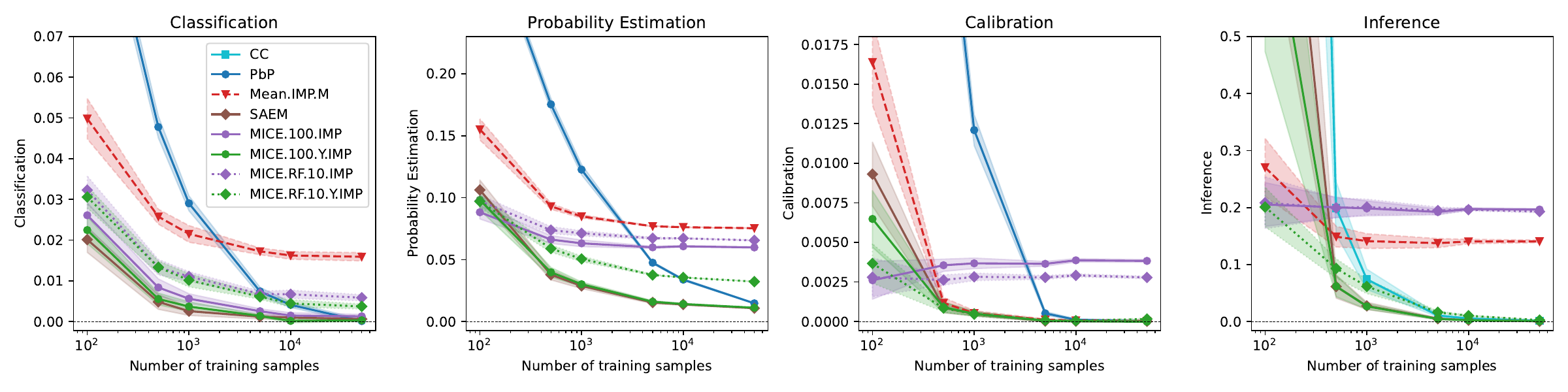}
\vspace{-0.5cm}
\caption{Performances of selected procedures in terms of Misclassification, Calibration, MSE of $\hat{\beta}$ and MAE from Bayes probabilities. Mean/s.e. are computed over 10 replicates of GPMM-MCAR (see \Cref{sec:methodo_SimA}).}
\label{fig:SimA_SelectedProcedures}
\end{figure*}

\subsection{Results for MCAR scenario}
\label{sec:results_MCAR_gaussian}

All procedures  (and their running times) are summarized in Table~\ref{tab:runtimeSimA}. All results are displayed in \Cref{fig:SimA_ALL}. We start by general comments across the metrics described in \Cref{subsec:evaluation}. 

\paragraph{Preliminary results - Constant imputation and MICE}
\Cref{fig:SimA_ALL} (a, b) displays the performance of single imputation procedures. We observe that in terms of constant imputation (\texttt{05.IMP} and \texttt{Mean.IMP}), adding the missingness mask improves the performance of \texttt{05.IMP}, while slightly deteriorating the one of \texttt{Mean.IMP} for small amount of training data.  In fact, similarly to what happens in linear regression \citep[see Proposition 3.1]{le2020linear}, it is straightforward to show that adding the mask amounts to optimizing the imputation constant (here with respect to the logistic loss), leading to \texttt{Mean.IMP.M} and \texttt{05.IMP.M} being equivalent. 
In \Cref{fig:SimA_ALL} (b, c, d, e), we observe that incorporating the mask $M$ in the imputation process of MICE methods has no effect. This was expected: the mask provides no extra information to the imputation model in  MCAR settings. Besides, incorporating the mask as an additional input vector in the logistic model slightly deteriorates the performance of MICE for small sample sizes, across all metrics, since it doubles the number of input features.

\paragraph{Calibration}
\Cref{fig:SimA_SelectedProcedures} aggregates selected method results. \texttt{Mean.IMP.M} exhibits calibration approaching zero, though with poor small training set performance. This trend is more pronounced in \texttt{PbP}, which requires large training sets for convergence. For MICE imputation, good performance is achieved by multiple imputations (with label $Y$).

\paragraph{Parameter estimation}
In terms of parameter inference, the MSE of \texttt{CC}, MICE imputations with labels $Y$ and \texttt{SAEM} vanish when the training set increases, estimating correctly $\beta^\star$. These methods have poor performances for small sample sizes. This was expected for \texttt{CC} as incomplete observations are discarded. 
\texttt{MICE.RF.Y} offers robust performances for small and large training sets. \texttt{PbP} cannot estimate $\beta^\star$ as the method fits one model per missing pattern.

\paragraph{Predictive Tasks} 
We observe that small classification errors are linked to small probability estimation errors. In both metrics, \texttt{SAEM} and MICE with multiple imputations demonstrate superior performance. This was expected for \texttt{SAEM} which is designed to correctly estimate Bayes probabilities in logistic regression with MAR data. We note however that incorporating the label $Y$ is necessary for multiple imputation MICE to approach the Bayes probabilities but not to attain the Bayes classification error. 
The advantages of multiple imputation within \texttt{MICE} procedures are evident; they provide robustness against the inherent imputation variance that can otherwise degrade the performance of single imputation methods. This is illustrated by \texttt{MICE.1.Y.IMP}, which, despite retrieving correctly the parameter $\beta^{\star}$, exhibits comparatively poor performance in predictive tasks. \texttt{MICE.RF} exhibit similar trends as the PMM variants, but its extra complexity comes at a performance cost.
Furthermore, the misclassification risk of Pattern-by-Pattern method approaches the Bayes risk for large training set. Similarly, the \texttt{PbP}'s probabilities are close to the Bayes probabilities, which is in line with \Cref{th:approx_logistic}, given the Gaussian distribution of the covariates in our simulation.

\paragraph{Running time} 
\Cref{tab:runtimeSimA} displays the training times and prediction time of each method. We observe that the training of all single imputation and complete case procedures only require less than 4s (for 50.000 training points). The training times of MICE procedures is linear in the number of imputations. Training the random forest variants is more costly than the default MICE models. \texttt{PbP} is fast to train, though its total cost depends on the number of distinct missingness patterns encountered. \texttt{SAEM} emerges as one of the slowest methods for both training and predicting (648s for 50.000 training points, 11s for 15.000 prediction points).

\begin{figure*}[ht!]
\centering
\includegraphics[width=0.99\textwidth]{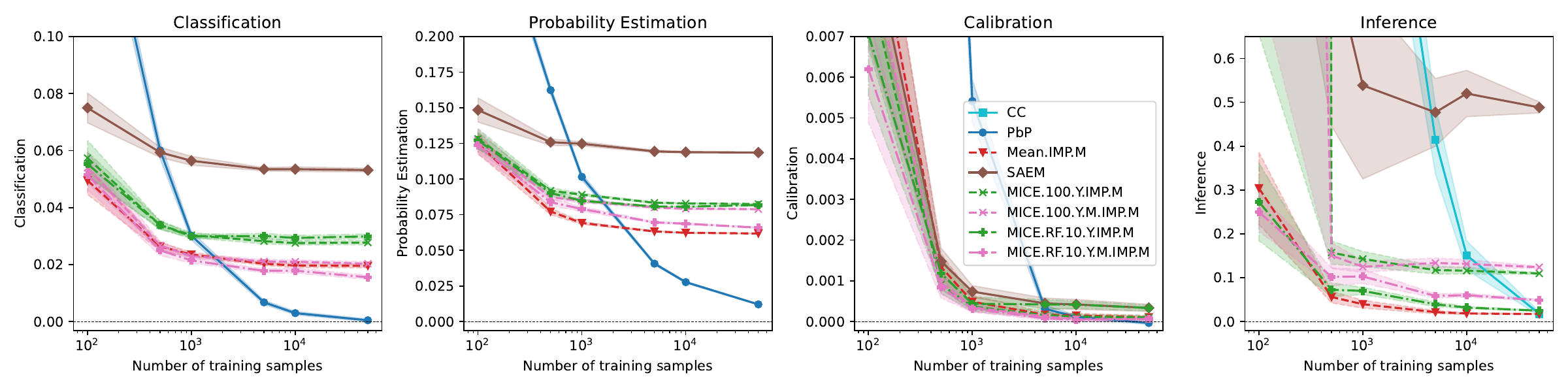}
\vspace{-0.5cm}
\caption{Performances of selected procedures in terms of Misclassification, Calibration, MSE of $\hat{\beta}$ and MAE from Bayes probabilities. Mean/s.e. are computed over 10 replicates of GPMM-MNAR (see \Cref{sec:methodo_SimA}).}
\label{fig:SimG_SelectedProcedures}
\end{figure*}

\subsection{Results for MNAR scenario}
\label{sec:resulst_mnar}

Selected methods for the MNAR setting appear in  \Cref{fig:SimG_SelectedProcedures} (details in \Cref{fig:SimG_ALL}). For MICE imputation methods (PMM and RF), best performance for classification and probability estimation are obtained when the mask $M$ is added in the logistic model, with further gains from adding $M$ and $Y$ in the imputation process.
For calibration and inference, $Y$ is required in the imputation process but $M$ is not. In this MNAR scenario, \texttt{PbP} is the only method seeming to converge to null error for classification, probability estimation and calibration metric. Such good performances are expected from \Cref{th:approx_logistic}. On the contrary, \texttt{SAEM} is the worst method for classification, probability estimation and parameter inference but surprisingly manages to maintain good performance in calibration. A more detailed analysis of several missing patterns reveals different behaviors. In \Cref{fig:SimG_MAE_patterns}, we notice that in the first and fourth missing patterns, the difference in classification between \texttt{PbP} and \texttt{Mean.IMP.M} for $n=50.000$ is $0.05$, much larger than the average $0.02$ displayed in \Cref{fig:SimG_SelectedProcedures}. Thus, good performance of inconsistent methods as \texttt{Mean.IMP.M} may hide poor performances on some specific missing patterns, which may correspond to some specific populations.

\subsection{Results for non-linear features}
\label{sec:additional_simulations_nonlinearfeatures}

\Cref{fig:SimC_ALL} displays the results for non-linear features with MCAR missingness. We see that all MICE variants have similar performances in classification. Adding $Y$ to MICE imputation improves probability estimation. While no method approximates the Bayes probabilities, multiple imputations \texttt{MICE.RF} approach the Bayes risk in classification and perform the best in probability estimation. The other methods (\texttt{MICE}, \texttt{SAEM}, \texttt{PbP}) suffer from the non-linearity of the features, since conditional expectations of missing input given observed inputs are not linear anymore.
Regarding calibration and parameter inference, we obtain similar results as for Gaussian features, with the exception that \texttt{Mean.IMP.M} is here miscalibrated.

\Cref{fig:SimC_MAE_patterns} displays the probability estimation performance when the prediction set has the 5 possible missingness mask with a single value missing. This figure reveals that for Gaussian missing features, one retrieves the same results as in the fully Gaussian simulation (\Cref{sec:results_MCAR_gaussian}). This is anticipated as conditional distribution of the missing data given the observed part is normal, in line with the assumptions of \texttt{SAEM} and \Cref{th:approx_logistic}. However, performance dropped for all methods when non-linear features are missing, especially for non-monotonic transformations.

\subsection{Experimental conclusions}

All experimental results are synthesized in \Cref{tab:misclassification_summary}- \ref{tab:mse_error_summary}. 

\paragraph{Classification performances} For a low sample size, \texttt{Mean.IMP} is a good baseline for GPMM, with a very low training time. In presence of non-linear features, \texttt{MICE.RF.Y} outperforms the other competitors, as it can detect non-linear relations between inputs. For a large sample size, we recommend using \texttt{PbP}, which is among the most efficient strategy for GPMM with a very low training time. For non-linear features, \texttt{MICE.RF.Y} has the best accuracy, but with a large training time. The same conclusions hold for probability estimation. 

\paragraph{Calibration} 
\texttt{Mean.IMP} has a good calibration score for GPMM settings, either in low- or high-sample scenarios. For low sample sizes, best results are obtained for \texttt{MICE.RF.Y} regardless of the data generating mechanism. For large sample sizes, we recommend using \texttt{PbP} which has a low training time and calibration error, closely followed by \texttt{Mean.IMP}. Besides, \texttt{MICE.RF.Y.IMP} has good calibration scores, but with a high training time. 

\paragraph{Parameter estimation} 
For low sample size, \texttt{Mean.IMP} is competitive both in terms of MSE and computation time. Best results in terms of MSE are obtained for all variants of \texttt{MICE.RF} and \texttt{MICE}. Adding either $M$ or $Y$ in \texttt{MICE} drastically degrades its performance. For large sample size,  \texttt{Mean.IMP} is not competitive anymore and \texttt{CC} should be preferred. Adding $Y$ to \texttt{MICE} or \texttt{MICE.RF} improves their performance: while requiring a large training time, both have a low MSE. 

\section{Real Datasets and Curse of Dimensionality}
\label{sec:real_data}
While the theoretical number of missingness patterns grows exponentially (i.e. $2^d$), we argue this worst-case scenario is rare in practice. We analyzed 20 datasets from \textit{R-miss-tastic} \citep{rmisstastic} and from the imputation benchmark from \citet{grzesiak2025needdozensmethodsreal} (see \Cref{app:real_datasets} for selection details and \Cref{tab:real_data_sets} for summary statistics of the data).

In 18 of the 20 datasets, the 10 most frequent patterns cover over 80\% of observations (see \Cref{tab:real_data_sets}). For instance, the \textit{soybean} dataset (d=36) has 68 billion potential patterns, yet only contains $9$ distinct patterns. This concentration suggests that missingness is highly structured, validating the computational feasibility of Pattern-by-Pattern (PbP) approaches in real-world scenarios.

We evaluated a panel of methods across these datasets, with results summarized in \Cref{fig:real_data1} and \ref{fig:real_data2}. While \texttt{PbP} remains competitive in datasets with highly structured missingness, such as \textit{globwarm}, \textit{pedestrian}, and \textit{selfreport} (which exhibit only 2 distinct patterns), the MICE imputation variants offer greater overall reliability. Notably, adding a missingness mask \emph{after} imputation is often detrimental. Surprisingly, despite the complexity of real data often violating the strong normality assumption, \texttt{SAEM} does not systematically underperform; in fact, it achieves the best AUC in several datasets (see \Cref{tab:auc_results}).

\section{Conclusion}

This work explored the challenges and solutions associated with logistic regression in the presence of missing data within covariates. Firstly, we theoretically demonstrated that the \texttt{PbP} strategy can effectively approximate Bayes probabilities when covariates are mixtures of Gaussian. While the number of possible missingness patterns grows exponentially, our analysis of real-world datasets reveals that this ``curse of dimensionality'' is often mitigated in practice, making \texttt{PbP} computationally efficient in most real-data scenarios.
Secondly, we conducted a comprehensive empirical comparison of various strategies for handling missing values. These methods were evaluated across four key aspects: classification, probability estimation, calibration, and parameter inference, with results condensed in \Cref{tab:misclassification_summary} to \ref{tab:mse_error_summary}. We derived a set of practical guidelines for selecting appropriate methods based on data characteristics and objectives. \texttt{Mean.IMP} and  \texttt{PbP} serve as strong baselines for respectively small and large sample sizes, due to their low training time, and their good predictive performances. Improved performance can be obtained for various sample sizes using \texttt{MICE.RF.Y}. 

\section*{Acknowledgments}
This work is part of the DIGPHAT project which was
supported by a grant from the French government,
managed by the National Research Agency (ANR),
under the France 2030 program, with reference ANR22-PESN-0017.

\bibliography{ref}
\bibliographystyle{icml2026}

\newpage
\appendix
\onecolumn
\section{Proofs}
\label{app_proofs}

\begin{lemma}
\label{lem:probit_integral}
For any $a,b \in \mathds{R}$, letting $f_{\mu, \sigma^2}$ the density of a Gaussian $\mathcal{N}(\mu, \sigma^2)$, we have
\begin{align}
    \int \Phi(t + ax) f_{\mu, \sigma^2}(x) \textrm{d}x & = \Phi\left( \frac{t + a \mu}{\sqrt{1 + a^2 \sigma^2}}\right). 
\end{align}
\end{lemma}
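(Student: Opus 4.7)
My plan is to use a probabilistic reformulation rather than a direct calculation. The integral on the left-hand side equals $\esp[\Phi(t + aX)]$ where $X \sim \mathcal{N}(\mu, \sigma^2)$. Introduce an auxiliary standard normal variable $Z \sim \mathcal{N}(0,1)$, independent of $X$. Then $\Phi(t + aX) = \P[Z \leq t + aX \mid X]$, so by the tower property,
\begin{equation*}
    \esp[\Phi(t + aX)] = \P[Z \leq t + aX] = \P[Z - aX \leq t].
\end{equation*}

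The variable $Z - aX$ is a linear combination of independent Gaussians, hence Gaussian. Its mean is $-a\mu$ and its variance is $1 + a^2 \sigma^2$, so
\begin{equation*}
    \P[Z - aX \leq t] = \P\left[ \frac{(Z - aX) - (-a\mu)}{\sqrt{1 + a^2 \sigma^2}} \leq \frac{t + a\mu}{\sqrt{1 + a^2\sigma^2}} \right] = \Phi\left(\frac{t + a\mu}{\sqrt{1 + a^2\sigma^2}}\right),
\end{equation*}
which is the desired identity.

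Alternatively, one can prove this by a purely analytic argument: write $\Phi(t + ax) = \int_{-\infty}^{t+ax} \phi(u)\,\textrm{d}u$ where $\phi$ is the standard Gaussian density, substitute this into the integral on the left, swap the order of integration (Fubini, since $\phi$ and $f_{\mu,\sigma^2}$ are both nonnegative and integrable), and complete the square inside the resulting double Gaussian integral. Both approaches are standard and contain no real obstacle; I would prefer the probabilistic one since it avoids algebraic bookkeeping entirely. The only subtlety is checking that $Z - aX$ has variance $1 + a^2 \sigma^2$ (not $1 + |a|\sigma^2$ or similar), but independence of $Z$ and $X$ makes this immediate by bilinearity of covariance.
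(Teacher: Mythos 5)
Your proof is correct, but it takes a genuinely different route from the paper's. The paper proves the lemma by direct computation: it substitutes $x = \sigma v + \mu$, writes $\Phi(b + a\sigma v)$ itself as a shifted Gaussian integral, applies Fubini, and completes the square in the resulting double integral to extract the answer. You instead introduce an auxiliary standard normal $Z$ independent of $X$, observe that $\Phi(t+aX) = \P[Z \leq t + aX \mid X]$, and reduce the whole identity to the statement that $Z - aX \sim \mathcal{N}(-a\mu, 1 + a^2\sigma^2)$ --- which is immediate from independence and stability of the Gaussian family under linear combinations. Your argument is shorter, avoids all the algebraic bookkeeping, and makes the structure of the result transparent: the left-hand side is literally the CDF of a sum of independent Gaussians evaluated at $t$. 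It also cleanly isolates where Gaussianity of $X$ is used (only in identifying the law of $Z - aX$); the first half of your argument, $\int \Phi(t+ax)\,f(x)\,\mathrm{d}x = \P[Z - aX \leq t]$, holds for any mixing density $f$. The paper's explicit computation buys nothing extra here beyond being self-contained at the level of calculus; either proof is acceptable, and the conclusion and hypotheses match the statement exactly. (The stray $b$ in the lemma's quantifier is an artifact of the paper's own statement, not something your proof needs to address.)
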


\begin{proof}[Proof of \Cref{lem:probit_integral}]

\begin{align*}
\int \Phi(t + ax) f_{\mu, \sigma^2}(x) \textrm{d}x &= \frac{1}{\sigma \sqrt{2\pi}}\int  \Phi(t + ax) \exp\left(-\frac{(x- \mu)^2}{2\sigma^2}\right) \, dx \\
& = \frac{1}{ \sqrt{2\pi}}\int  \Phi(t + a(\sigma v + \mu)) \exp\left(-\frac{v^2}{2}\right) \, dv \\
& = \frac{1}{ \sqrt{2\pi}}\int  \Phi(b + a\sigma v ) \exp\left(-\frac{v^2}{2}\right) \, dv,
\end{align*}
with $b=t + a \mu$.

\begin{align*}
\Phi(  b + a \sigma v ) & = \int_{-\infty}^{ b + a \sigma v} \frac{1}{\sqrt{2\pi}} \exp\left(-\frac{y^2}{2}\right)\, dy \\
& = \int_{-\infty}^{ b} \frac{1}{\sqrt{2\pi}} \exp\left(-\frac{(u + a \sigma v)^2}{2}\right)\, du.
\end{align*}
Thus, 
\begin{align*}
\int \Phi(t + ax) f_{\mu, \sigma^2}(x) \textrm{d}x &= \frac{1}{ \sqrt{2\pi}}\int  \int_{-\infty}^{ b} \frac{1}{\sqrt{2\pi}} \exp\left(-\frac{(u + a \sigma v)^2}{2}\right)\, du \exp\left(-\frac{v^2}{2}\right) \, dv \\
&= \frac{1}{ \sqrt{2\pi}}\int  \int_{-\infty}^{ b} \frac{1}{\sqrt{2\pi}} \exp\left(-\frac{(u + a \sigma v)^2}{2} -\frac{v^2}{2}\right)\, du   dv.
\end{align*}
Noticing that 
\begin{align*}
-\frac{(u + a \sigma v)^2}{2} -\frac{v^2}{2} & = - \frac{1}{2} (1 + a^2\sigma^2) \left(v + \frac{ua\sigma}{1 + a^2 \sigma^2}\right)^2 - \frac{u^2 }{2(1 + a^2 \sigma^2)},   
\end{align*}
we obtain
\begin{align*}
\int \Phi(t + ax) f_{\mu, \sigma^2}(x) \textrm{d}x & = \frac{1}{ \sqrt{2\pi}} \frac{1}{\sqrt{1+a^2 \sigma^2}}  \int_{-\infty}^{ b}  \exp\left(- \frac{u^2 }{2(1 + a^2 \sigma^2)}\right)\, du  \\
& = \frac{1}{ \sqrt{2\pi}}   \int_{-\infty}^{ b/\sqrt{1 + a^2 \sigma^2}}  \exp\left(- \frac{v^2 }{2}\right)\, dv  \\
& = \Phi\left( \frac{t + a \mu}{\sqrt{1 + a^2 \sigma^2}}\right).
\end{align*}

\end{proof}

\subsection{Proof of \Cref{th:probit_model}}
\label{app:proofs_1}

Fix a pattern $m \in \{0,1\}^d$. The Bayes predictor on pattern $m$ is defined as 
\begin{align}
\mathds{P}[Y=1 | X_{obs(m)}, M=m] & = \mathds{E}[ \mathds{P}[Y=1 | X, M=m] | X_{obs(m)}, M=m] \\
& = \mathds{E}[ \mathds{P}[Y=1 | X] | X_{obs(m)}, M=m] \\
& = \mathds{E}[ \Phi ( \beta_0^\star + \sum_{j=1}^d \beta_j^\star X_j)  | X_{obs(m)}, M=m].
\end{align}
where we have used the fact that $Y$ is independent of $M$ conditional on $X$, due to the GPMM assumption. 
Thus, for any $x \in \mathds{R}^{|obs(m)|}$, 
\begin{align}
\mathds{P}[Y=1 | X_{obs(m)} = x, M=m]  
& = \mathds{E}[ \Phi ( u + Z)  | X_{obs(m)} = x, M=m],
\end{align}
letting $u = \beta_0^\star + \sum_{j \in obs(m)} \beta_j^\star x_j$ and $Z = \sum_{j \notin obs(m)} \beta_j^\star X_j$. By assumption, $X | M = m$ is Gaussian. Thus, the distribution of $X_{mis(m)}$ conditional on $X_{obs(m)} = x, M=m$ is also Gaussian, distributed as 
$\mathcal{N}(\mu_m'(x), \Sigma_m')$ with \citep[see, e.g.,][]{majumdar2019conditional}  
\begin{align}
\mu'_m(x) & = \mu_{m,mis(m)} + \Sigma_{m,mis(m), obs(m)} \Sigma_{m,obs(m), obs(m)}^{-1} (x - \mu_{m,obs(m)}) \\
\Sigma'_m & = \Sigma_{m,mis(m), mis(m)} - \Sigma_{m,mis(m), obs(m)} \Sigma_{m,obs(m), obs(m)}^{-1}  \Sigma_{m,obs(m), mis(m)}.
\end{align}
Since $ Z = (\beta^{\star}_{mis(m)})^\top X_{mis(m)}$, the distribution of $Z$ conditional on $X_{obs(m)} = x$ and $M=m$ is $\mathcal{N}(\tilde{\mu}_m(x) , \tilde{\sigma}^2_m )$, with
\begin{align}
    \tilde{\mu}_m(x) & = (\beta^{\star}_{mis(m)})^\top\mu'_m(x)\\
    \tilde{\sigma}^2_m & = (\beta^{\star}_{mis(m)})^\top \Sigma'_m \beta^{\star}_{mis(m)}.
\end{align}
Let $f (z ; \tilde{\mu}_m(x), \tilde{\sigma}^2_m)$ be the density of a univariate Gaussian with parameters $(\tilde{\mu}_m(x), \tilde{\sigma}^2_m)$. Then, the Bayes predictor on pattern $m$ takes the form 
\begin{align}
\mathds{P}[Y=1 | X_{obs(m)}=x, M=m]  
& = \int \Phi ( u + z )  f (z ; \tilde{\mu}_m(x), \tilde{\sigma}^2_m) \textrm{d}z\\
 & = \Phi\left( \frac{ \tilde{\mu}_m(x) + u  }{\sqrt{1 + \tilde{\sigma}^2_m}}\right),
\end{align}
using \Cref{lem:probit_integral}, where
\begin{align*}
u & = \beta_0^\star + \sum_{j \in obs(m)} \beta_j^\star x_j \\
\tilde{\mu}_m(x)  & = (\beta^{\star}_{mis(m)})^\top\mu_{m,mis(m)}   + (\beta^{\star}_{mis(m)})^\top \Sigma_{m,mis(m), obs(m)} \Sigma_{m,obs(m), obs(m)}^{-1} (x - \mu_{m,obs(m)}).
\end{align*}
Letting 
\begin{align*}
    \alpha_{0,m} & = \beta_0^\star + (\beta^{\star}_{mis(m)})^\top\mu_{m,mis(m)}  - (\beta^{\star}_{mis(m)})^\top \Sigma_{m,mis(m), obs(m)} \Sigma_{m,obs(m), obs(m)}^{-1}  \mu_{m,obs(m)}\\
    \alpha_m & = \beta_{obs(m)}^\star +  \Sigma_{m,obs(m), obs(m)}^{-1} \Sigma_{m,obs(m), mis(m)} \beta^{\star}_{mis(m)},
\end{align*}
we have
\begin{align}
     \mathds{P}[Y=1 | X_{obs(m)} = x, M=m]  = \Phi\left( \frac{ \alpha_{0,m}  +  \alpha_m^\top x  }{\sqrt{1+  \tilde{\sigma}^2_m}}\right),
\end{align}
with $\tilde{\sigma}^2_m  = (\beta^{\star}_{mis(m)})^\top \Sigma'_m \beta^{\star}_{mis(m)}$ where
\begin{align}
    \Sigma'_m & = \Sigma_{m,mis(m), mis(m)} - \Sigma_{m,mis(m), obs(m)} \Sigma_{m,obs(m), obs(m)}^{-1}  \Sigma_{m,obs(m), mis(m)}.
\end{align}

\subsection{Proof of \Cref{th:approx_logistic}}

\label{app:proofs_2}

Fix a pattern $m \in \{0,1\}^d$. The Bayes predictor on pattern $m$ is defined as 
\begin{align}
\mathds{P}[Y=1 | X_{obs(m)}, M=m] & = \mathds{E}[ \mathds{P}[Y=1 | X, M=m] | X_{obs(m)}, M=m] \\
& = \mathds{E}[ \mathds{P}[Y=1 | X] | X_{obs(m)}, M=m] \\
& = \mathds{E}[ \sigma ( \beta_0^\star + \sum_{j=1}^d \beta_j^\star X_j)  | X_{obs(m)}, M=m],
\end{align}
where we have used the fact that $Y$ is independent of $M$ conditional on $X$, due to the GPMM assumption. 
Thus, for any $x \in \mathds{R}^{|obs(m)|}$, 
\begin{align}
\mathds{P}[Y=1 | X_{obs(m)} = x, M=m]  
& = \mathds{E}[ \sigma ( u + Z)  | X_{obs(m)} = x, M=m],
\end{align}
letting $u = \beta_0^\star + \sum_{j \in obs(m)} \beta_j^\star x_j$ and $Z = \sum_{j \notin obs(m)} \beta_j^\star X_j$. By assumption, $X|M=m$ is Gaussian. Thus, the distribution of $X_{mis(m)}$ conditional on $X_{obs(m)} = x$, $M=m$ is also Gaussian, distributed as 
$\mathcal{N}(\mu'_m(x), \Sigma_m')$ with \citep[see, e.g.,][]{majumdar2019conditional}  
\begin{align}
\mu'_m(x) &= \mu_{m,mis(m)} + \Sigma_{m,mis(m), obs(m)} \Sigma_{m,obs(m), obs(m)}^{-1} (x - \mu_{m,obs(m)}) \\
\Sigma'_m & = \Sigma_{m,mis(m), mis(m)} - \Sigma_{m,mis(m), obs(m)} \Sigma_{m,obs(m), obs(m)}^{-1}  \Sigma_{m,obs(m), mis(m)}.
\end{align}
Since $ Z = (\beta^{\star}_{mis(m)})^\top X_{mis(m)}$, the distribution of $Z$ conditional on $X_{obs(m)} = x$ and $M = m$ is $\mathcal{N}(\tilde{\mu}_m(x) , \tilde{\sigma}^2_m )$, with
\begin{align}
    \tilde{\mu}_m(x) & = (\beta^{\star}_{mis(m)})^\top\mu'_m(x)\\
    \tilde{\sigma}^2_m & = (\beta^{\star}_{mis(m)})^\top \Sigma'_m \beta^{\star}_{mis(m)}.
\end{align}

Let $f (z ; \tilde{\mu}_m(x), \tilde{\sigma}^2_m)$ be the density of a univariate Gaussian with parameters $(\tilde{\mu}_m(x), \tilde{\sigma}^2_m)$. Then, the Bayes predictor on pattern $m$ takes the form 
\begin{align}
\mathds{P}[Y=1 | X_{obs(m)}=x, M=m]  
& = \int \sigma ( u + z )  f (z ; \tilde{\mu}_m(x), \tilde{\sigma}^2_m) \textrm{d}z.
\end{align}
Recall that the probit function is defined as $$\Phi(t) = \int_{-\infty}^t \frac{1}{\sqrt{2\pi}} e^{-t^2/2} \textrm{d}t.$$
Letting $\varepsilon(t) = \Phi(t) - \sigma(\sqrt{8/\pi} t)$, we have numerically that $\|\varepsilon\|_{\infty} \simeq 0.018$. Thus, we use the following decomposition:
\begin{align}
 \mathds{P}[Y=1 | X_{obs(m)}=x, M=m] &  = \int \Phi(\sqrt{\pi/8} ( u + z ))  f (z ; \tilde{\mu}_m(x), \tilde{\sigma}^2_m) \textrm{d}z \\
& \qquad + \int (\sigma (u+z) - \Phi(\sqrt{\pi/8} ( u + z )))  f (z ; \tilde{\mu}_m(x), \tilde{\sigma}^2_m) \textrm{d}z.
\end{align}
Using \Cref{lem:probit_integral}, we obtain 
\begin{align}
\int \Phi(\sqrt{\pi/8} ( u + z ))  f (z ; \tilde{\mu}_m(x), \tilde{\sigma}^2_m) \textrm{d}z = \Phi\left( \frac{ \tilde{\mu}_m(x) + u  }{\sqrt{(8/\pi) + \tilde{\sigma}^2_m}}\right).
\end{align}
Thus, 
\begin{align}
 & \mathds{P}[Y=1 | X_{obs(m)}=x, M=m] \\ 
&  =  \Phi\left( \frac{ \tilde{\mu}_m(x) + u  }{\sqrt{(8/\pi) + \tilde{\sigma}^2_m}}\right)  + \int (\sigma (u+z) - \Phi(\sqrt{\pi/8} ( u + z )))  f (z ; \tilde{\mu}_m(x), \tilde{\sigma}^2_m) \textrm{d}z \\
& = \sigma\left( \frac{ \tilde{\mu}_m(x) + u  }{\sqrt{1+ (\pi/8) \tilde{\sigma}^2_m}}\right) + \left( \Phi\left( \frac{ \tilde{\mu}_m(x) + u  }{\sqrt{(8/\pi) + \tilde{\sigma}^2_m(x)}}\right) - \sigma\left( \frac{ \tilde{\mu}_m(x) + u  }{\sqrt{1+ (\pi/8) \tilde{\sigma}^2_m}}\right) \right) \\
& \qquad + \int (\sigma (u+z) - \Phi(\sqrt{\pi/8} ( u + z )))  f (z ; \tilde{\mu}_m(x), \tilde{\sigma}^2_m) \textrm{d}z.
\end{align}
Finally, we have
\begin{align}
\left| \mathds{P}[Y=1 | X_{obs(m)} = x, M=m]  - \sigma\left( \frac{ \tilde{\mu}_m(x) + u  }{\sqrt{1+ (\pi/8) \tilde{\sigma}^2_m}}\right) \right| \leq 2 \|\varepsilon\|_{\infty},
\end{align}

with 
\begin{align*}
u & = \beta_0^\star + \sum_{j \in obs(m)} \beta_j^\star x_j \\
\tilde{\mu}_m(x)  & = (\beta^{\star}_{mis(m)})^\top\mu_{m,mis(m)} \\
&\qquad + (\beta^{\star}_{mis(m)})^\top \Sigma_{m,mis(m), obs(m)} \Sigma_{m,obs(m), obs(m)}^{-1} (x - \mu_{m,obs(m)}).
\end{align*}
Letting 
\begin{align*}
    \alpha_{0,m} & = \beta_0^\star + (\beta^{\star}_{mis(m)})^\top\mu_{m,mis(m)} \\
    & \qquad - (\beta^{\star}_{mis(m)})^\top \Sigma_{m, mis(m), obs(m)} \Sigma_{m, obs(m), obs(m)}^{-1}  \mu_{m, obs(m)}\\
    \alpha_m & = (\beta_{obs(m)}^\star)^\top x + (\beta^{\star}_{mis(m)})^\top \Sigma_{m, mis(m), obs(m)} \Sigma_{m, obs(m), obs(m)}^{-1} x,
\end{align*}
we have
\begin{align}
    \left| \mathds{P}[Y=1 | X_{obs(m)} = x, M=m]  - \sigma\left( \frac{ \alpha_{0,m}  +  \alpha_m^\top x  }{\sqrt{1+ (\pi/8) \tilde{\sigma}^2_m}}\right) \right| \leq 2 \|\varepsilon\|_{\infty},
\end{align}
with $\tilde{\sigma}^2_m  = (\beta^{\star}_{mis(m)})^\top \Sigma'_m \beta^{\star}_{mis(m)}$ where
\begin{align}
    \Sigma'_m & = \Sigma_{m,mis(m), mis(m)} - \Sigma_{m,mis(m), obs(m)} \Sigma_{m,obs(m), obs(m)}^{-1}  \Sigma_{m,obs(m), mis(m)}.
\end{align}

\section{Details on Experimental Setting}

\subsection{Estimating Bayes Probabilities}
\label{app:howtocomputepstar}

Assuming that the complete data follows a logistic model independent from the missingness mask, i.e. $\P[Y|X,M]=\P[Y|X]=\sigma((\beta^\star)^\top X)$, we have
\begin{align}
   \eta^\star_m(x_{obs(m)}) & = \mathds{E}[Y|X_{obs(M)} = x_{obs}, M = m] \\
   & = \mathds{E}[ \mathds{E}[Y | X, M=m] |X_{obs(M)} = x_{obs}, M = m] \\
   & = \mathds{E}[ \mathds{E}[Y | X] |X_{obs(M)} = x_{obs}, M = m] \\
   & = \mathds{E}[ \sigma((\beta^\star)^\top X) |X_{obs(M)} = x_{obs}, M = m].
\end{align}

For any pattern $m \in \{0,1\}^d$, and any $x_{obs(m)}$, we compute the Bayes probability $\eta^\star_m(x_{obs(m)})$ as follows: 
\begin{enumerate}
    \item We sample $k$ observations $x_{mis(m)}^1, \hdots, x_{mis(m)}^k$ from the conditional distribution of $X_{mis(m)} \mid X_{obs(m)} = x_{obs(m)}$. In our simulations (MCAR settings with Gaussian inputs, GPMM-MAR or GPMM-MNAR), this distribution is Gaussian with known parameters (mean and covariance matrix). We let $x^{(1)}, \ldots, x^{(k)}$ be the full observations, obtained by combining $x_{obs(m)}$ and each generated vector $x_{mis(m)}^\ell$.
    
    \item We estimate the Bayes probability $\eta^\star_m(x_{obs(m)})$ by the Monte Carlo average $\frac{1}{k}\sum_{i=1}^k \etaStar(x^{(i)})$, where $\etaStar(x) = \sigma ((\beta^\star)^\top x)$.
\end{enumerate}

In the model of \Cref{sec:additional_simulations_nonlinearfeatures}, we compute the Bayes probabilities as follows, leveraging the latent Gaussian distribution of the latent features. Recall that in this model, $X$ is Gaussian and $Z = g(X) = (g_j(X_j))_{j=1, \hdots, 5}$ where $g$ is invertible. In the MCAR setting,  
\begin{align}
   \eta^\star_m(z_{obs(m)}) & = \mathds{E}[Y|Z_{obs(M)} = z_{obs}, M = m] \\
   & = \mathds{E}[ \mathds{E}[Y | Z, M=m] |Z_{obs(M)} = z_{obs}, M = m] \\
   & = \mathds{E}[ \mathds{E}[Y | Z] |Z_{obs(M)} = z_{obs}, M = m] \\
   & = \mathds{E}[ \sigma((\beta^\star)^\top Z) |Z_{obs(M)} = z_{obs}, M = m].
\end{align}
In order to compute the Bayes probability $\eta^\star_m(z_{obs(m)})$: 
\begin{enumerate}
    \item we compute $x_{obs(m)} = g^{-1}_{obs(m)}(z_{obs(m)})$
    \item Since $X_{mis(m)}| X_{obs(m)} = x_{obs(m)}$ is Gaussian with known parameters, one can generate complete observations $x^1, \hdots, x^k$, with observed components $x_{obs(m)}$ and missing components generated as $X_{mis(m)}| X_{obs(m)} = x_{obs(m)}$.
    \item Let $z^1, \hdots, z^k$ such that $z^\ell = g(x^{\ell})$.
    \item We estimate the Bayes probability $\eta^\star_m(z_{obs(m)})$ by the Monte Carlo average $\frac{1}{k}\sum_{\ell=1}^k \etaStar(z^{(\ell)})$, where $\etaStar(z) = \sigma ((\beta^\star)^\top z)$.
\end{enumerate}

\subsection{Calibration measure} \label{sec:risk_measures_appendix}

\citet{dimitriadis2021stable} proposes a consistent approach to recalibrate probability estimations using isotonic regression (via the Pool-Adjacent-Violators algorithm). A decomposition of the Brier score is constructed from this recalibration. The Miscalibration (MCB) component is defined as the difference between the mean Brier score of the original forecast probabilities, $\eta$, and the mean Brier score of the suitably recalibrated probabilities, $\eta^c$:
\begin{equation} \text{MCB} = \frac{1}{n} \sum_{i=1}^n (\eta_i - Y_i)^2 - \frac{1}{n} \sum_{i=1}^n (\eta^c_i - Y_i)^2. \end{equation} Here, $\eta^c$ represents the isotonic estimator of the true event probability. Consequently, MCB quantifies the reduction in Brier score achieved by perfect recalibration, representing the forecast's deviation from perfect calibration. This score is implemented via the \textsc{R} package \texttt{reliabilitydiag}.

\section{Simulations}
\subsection{Gaussian features (MCAR)}

\begin{table}[h!]
\centering
\caption{Average training and prediction time, in seconds, of the procedures for different training sample sizes, for the experiment described in \ref{sec:methodo_SimA}. All simulations were ran on an internal cluster, using CPUs only.}
\begin{tabular}{l c c c c c c c}
\toprule
\textbf{Algorithms} & \multicolumn{6}{c}{\textbf{Training}} & \multicolumn{1}{c}{\textbf{Prediction}} \\
\cmidrule(lr){2-7} \cmidrule(lr){8-8}
& 100 & 500 & 1000 & 5000 & 10000 & 50000 & 15000 \\
\midrule
\texttt{05.IMP} & 0.006 & 0.007 & 0.006 & 0.015 & 0.060 & 0.231 & 0.013 \\
\texttt{05.IMP.M} & 0.007 & 0.006 & 0.007 & 0.020 & 0.140 & 0.343 & 0.024 \\
\texttt{Mean.IMP} & 0.008 & 0.005 & 0.045 & 0.050 & 0.095 & 0.383 & 0.015 \\
\texttt{Mean.IMP.M} & 0.006 & 0.006 & 0.008 & 0.100 & 0.079 & 0.383 & 0.013 \\
\texttt{MICE.1.IMP} & 0.372 & 0.409 & 0.384 & 0.464 & 0.737 & 1.971 & 0.012 \\
\texttt{MICE.1.IMP.M} & 0.464 & 0.324 & 0.415 & 0.495 & 0.787 & 2.191 & 0.027 \\
\texttt{MICE.1.M.IMP} & 0.569 & 0.513 & 0.627 & 0.740 & 1.123 & 2.996 & 0.020 \\
\texttt{MICE.1.M.IMP.M} & 0.525 & 0.547 & 0.659 & 0.809 & 0.965 & 2.873 & 0.028 \\
\texttt{MICE.1.Y.IMP} & 0.340 & 0.460 & 0.480 & 0.766 & 1.055 & 2.982 & 0.020 \\
\texttt{MICE.1.Y.IMP.M} & 0.366 & 0.492 & 0.487 & 0.747 & 1.052 & 2.561 & 0.027 \\
\texttt{MICE.1.Y.M.IMP} & 0.526 & 0.676 & 0.706 & 0.946 & 1.198 & 3.205 & 0.016 \\
\texttt{MICE.1.Y.M.IMP.M} & 0.432 & 0.639 & 0.658 & 0.948 & 1.105 & 3.334 & 0.022 \\
\texttt{MICE.10.IMP} & 2.244 & 2.549 & 2.830 & 3.789 & 5.145 & 15.878 & 0.113 \\
\texttt{MICE.10.IMP.M} & 1.781 & 1.892 & 1.901 & 3.880 & 5.242 & 15.839 & 0.195 \\
\texttt{MICE.10.M.IMP} & 2.762 & 3.287 & 3.943 & 5.412 & 7.222 & 20.752 & 0.142 \\
\texttt{MICE.10.M.IMP.M} & 3.586 & 3.473 & 3.648 & 5.332 & 7.487 & 21.886 & 0.251 \\
\texttt{MICE.10.Y.IMP} & 2.610 & 2.802 & 2.923 & 5.026 & 6.745 & 20.090 & 0.152 \\
\texttt{MICE.10.Y.IMP.M} & 2.511 & 3.032 & 3.538 & 5.007 & 6.692 & 19.519 & 0.210 \\
\texttt{MICE.10.Y.M.IMP} & 4.688 & 4.582 & 4.596 & 6.876 & 9.543 & 27.645 & 0.180 \\
\texttt{MICE.10.Y.M.IMP.M} & 3.632 & 3.661 & 3.859 & 7.464 & 9.229 & 26.415 & 0.244 \\
\texttt{MICE.100.IMP} & 10.521 & 11.527 & 12.836 & 18.963 & 27.920 & 88.253 & 0.633 \\
\texttt{MICE.100.IMP.M} & 9.740 & 10.597 & 12.398 & 17.844 & 26.094 & 90.693 & 0.894 \\
\texttt{MICE.100.M.IMP} & 16.441 & 17.864 & 19.500 & 27.720 & 38.745 & 122.452 & 0.688 \\
\texttt{MICE.100.M.IMP.M} & 15.552 & 16.995 & 17.669 & 28.129 & 39.959 & 130.258 & 1.060 \\
\texttt{MICE.100.Y.IMP} & 14.202 & 15.399 & 16.761 & 25.502 & 36.757 & 113.436 & 0.607 \\
\texttt{MICE.100.Y.IMP.M} & 13.783 & 14.794 & 16.431 & 24.389 & 34.885 & 118.433 & 1.006 \\
\texttt{MICE.100.Y.M.IMP} & 21.270 & 23.161 & 24.485 & 37.048 & 48.413 & 158.226 & 0.741 \\
\texttt{MICE.100.Y.M.IMP.M} & 19.806 & 22.349 & 23.881 & 35.385 & 50.326 & 169.439 & 1.110 \\
\texttt{MICE.RF.10.IMP} & 43.400 & 49.307 & 51.109 & 94.532 & 131.992 & 559.480 & 0.133 \\
\texttt{MICE.RF.10.IMP.M} & 43.537 & 51.034 & 53.388 & 94.540 & 132.099 & 575.752 & 0.227 \\
\texttt{MICE.RF.10.M.IMP} & 44.343 & 52.237 & 53.382 & 92.365 & 127.217 & 490.945 & 0.161 \\
\texttt{MICE.RF.10.M.IMP.M} & 44.630 & 52.275 & 53.069 & 90.778 & 121.516 & 508.571 & 0.266 \\
\texttt{MICE.RF.10.Y.IMP} & 50.231 & 72.512 & 83.388 & 121.470 & 177.875 & 613.060 & 0.134 \\
\texttt{MICE.RF.10.Y.IMP.M} & 51.630 & 74.429 & 82.628 & 122.824 & 179.444 & 606.207 & 0.253 \\
\texttt{MICE.RF.10.Y.M.IMP} & 53.064 & 75.109 & 83.732 & 122.853 & 169.484 & 578.275 & 0.151 \\
\texttt{MICE.RF.10.Y.M.IMP.M} & 55.766 & 78.454 & 84.918 & 123.331 & 178.718 & 590.501 & 0.273 \\
\texttt{CC} & 0.006 & 0.009 & 0.007 & 0.015 & 0.026 & 0.187 & --- \\
\texttt{PbP} & 0.068 & 0.186 & 0.203 & 0.776 & 1.559 & 5.669 & 0.270 \\
\texttt{SAEM} & 3.761 & 6.188 & 9.156 & 32.877 & 60.517 & 275.664 & 11.738 \\
\bottomrule
\end{tabular}
\label{tab:runtimeSimA}
\end{table}

\begin{figure*}[h!]
\centering
\includegraphics[width=0.85\textwidth]{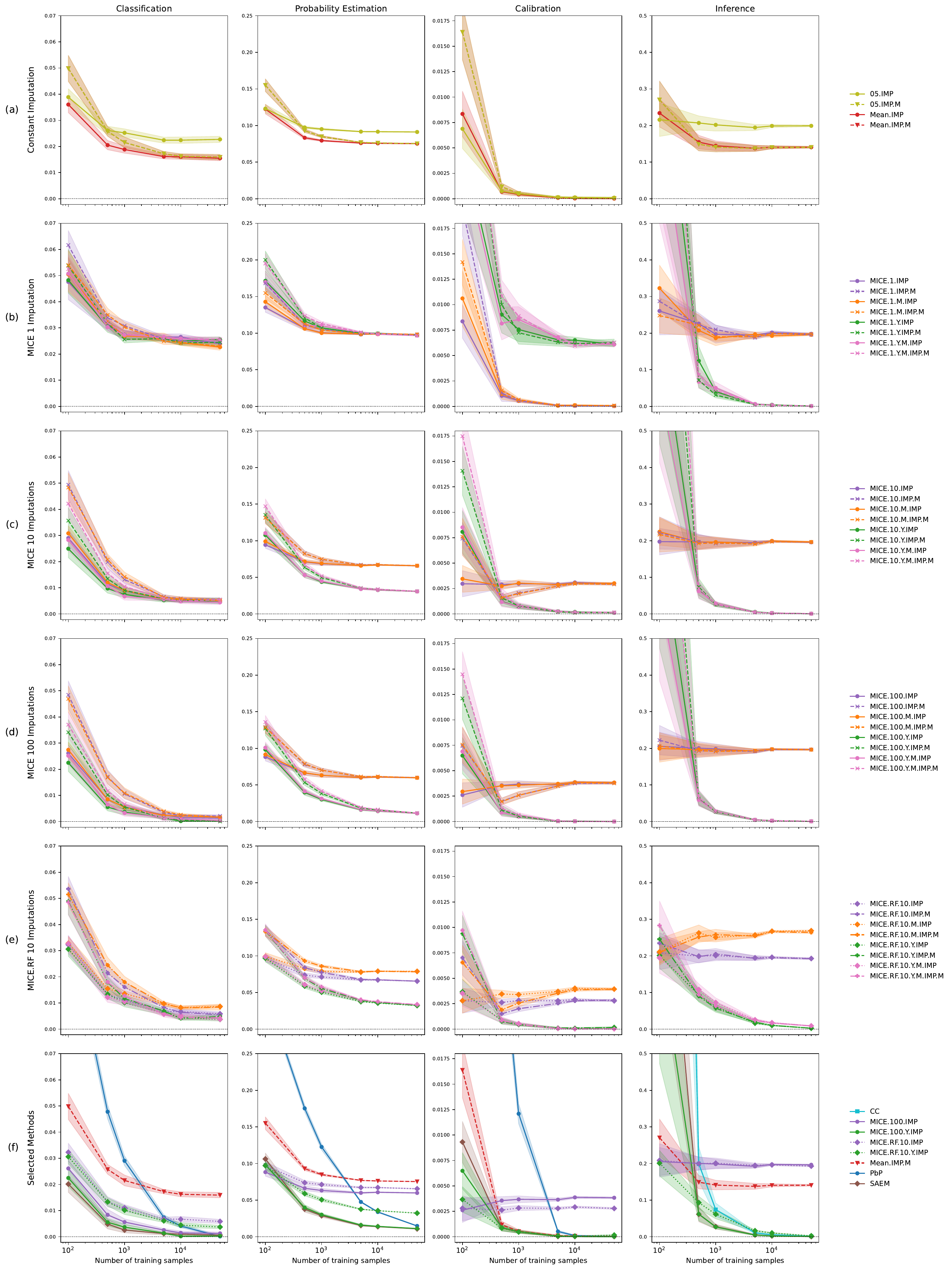}
\vspace{-0.2cm}
\caption{Aggregate results of the simulation GPMM (MCAR). A panel of methods are evaluated on Classification (via misclassification rate), Probability Estimation (via MAE from Bayes probabilities), Calibration (via CORP-MCB) and Inference (via MSE of $\widehat{\beta}$). Mean and standard errors over 10 replicates are displayed. Note that the curves from \texttt{Mean.IMP.M} and \texttt{05.IMP.M} overlap, as do those from \texttt{MICE.10.M.IMP} and \texttt{MICE.10.IMP} for some metrics.}
\label{fig:SimA_ALL}
\end{figure*}

\clearpage
\subsection{Gaussian Pattern Mixture Model (MAR)}
\label{app:gpmm_mar}

\subsubsection{Methodology}

In this GPMM-MAR scenario, the feature dimension is set to $d=5$, as in the other settings. The first two features are always observed.  For all patterns $m$, we decompose $\mu_m$ into $\mu_{m,1:2}$ and $\mu_{m,3:5}$ with 
$\mu_{m,3:5} = \textbf{0}$ and $\mu_{m,1:2}\sim \mathcal{N}(0,0.5I_2)$. Similarly, we let $\Sigma_m$ be a block-diagonal matrix where $\Sigma_{m,3:5\times3:5} = [\rho^{|i-j|}]_{i,j=3}^5$ with $\rho = 0.65$ and  $\Sigma_{m,1:2\times1:2}$ as $\sigma_m [\rho_m^{|i-j|}]_{i,j=1}^2$, where, for each $m$, we sample $\rho_{m}\sim \mathcal{U}([-1,1])$, $\sigma_m\sim \mathcal{U}([0,1])$. The components of the missing mask $M$ are independent Bernoulli random variables $M_j \sim \mathcal{B}(p_j)$ with $p = [0,0,0.25,0.25,0.25]$.

\subsubsection{Results}
\label{sec:results_mar}

\Cref{fig:SimE_ALL} displays the results for the MAR setting. \texttt{MICE.1} has poor performances in classification and probability estimation. \texttt{MICE.100} has the best overall performance; considering RF imputations (\texttt{MICE.RF.10}) does not improve the performances.  Contrary to the MCAR Gaussian feature case, incorporating the labels $Y$ into multiple MICE imputation procedures improves performances across all metrics. This is also true for \texttt{MICE.RF}. Consistent with our theoretical findings, \texttt{PbP} demonstrates excellent performance in all metrics for large sample sizes. In classification and probability estimation, \texttt{SAEM} and \texttt{MICE.100.Y.IMP} (with or without mask in either imputation or logistic model) exhibit the best overall performance alongside \texttt{PbP}. 
Regarding calibration and inference, the results are largely similar to those observed with Gaussian features.

\begin{figure*}[ht!]
\centering
\includegraphics[width=0.7\textwidth]{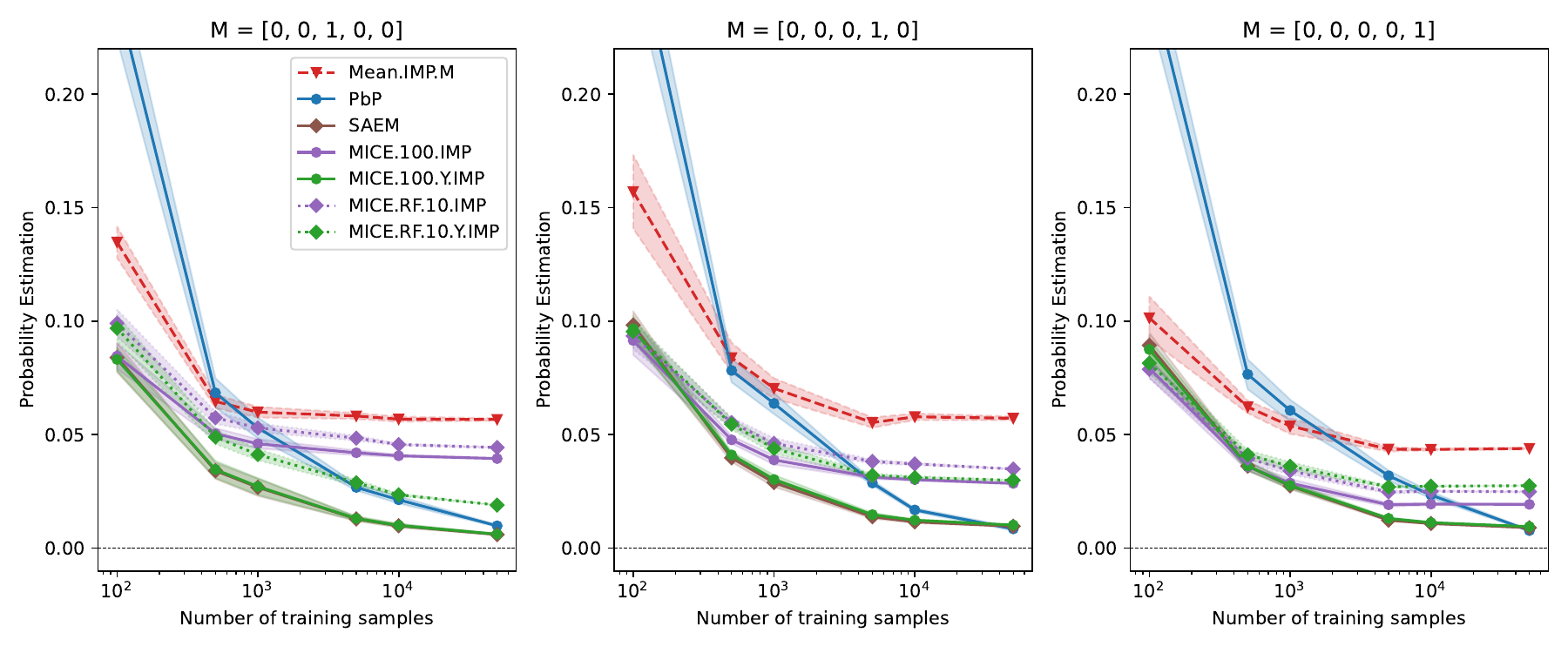}
\vspace{-0.2cm}
\caption{Performances of selected procedures in terms of MAE from Bayes probabilities. The results are displayed by missing pattern in the test set (with one missing index: [0,0,1,0,0], [0,0,0,1,0], [0,0,0,0,1]). Means and standard errors over 10 replicates are displayed. Note that the curves from \texttt{SAEM} and \texttt{MICE.100.Y.IMP} overlap.}
\label{fig:SimE_MAE_patterns}
\end{figure*}

\begin{figure*}[h!]
\centering
\includegraphics[width=0.85\textwidth]{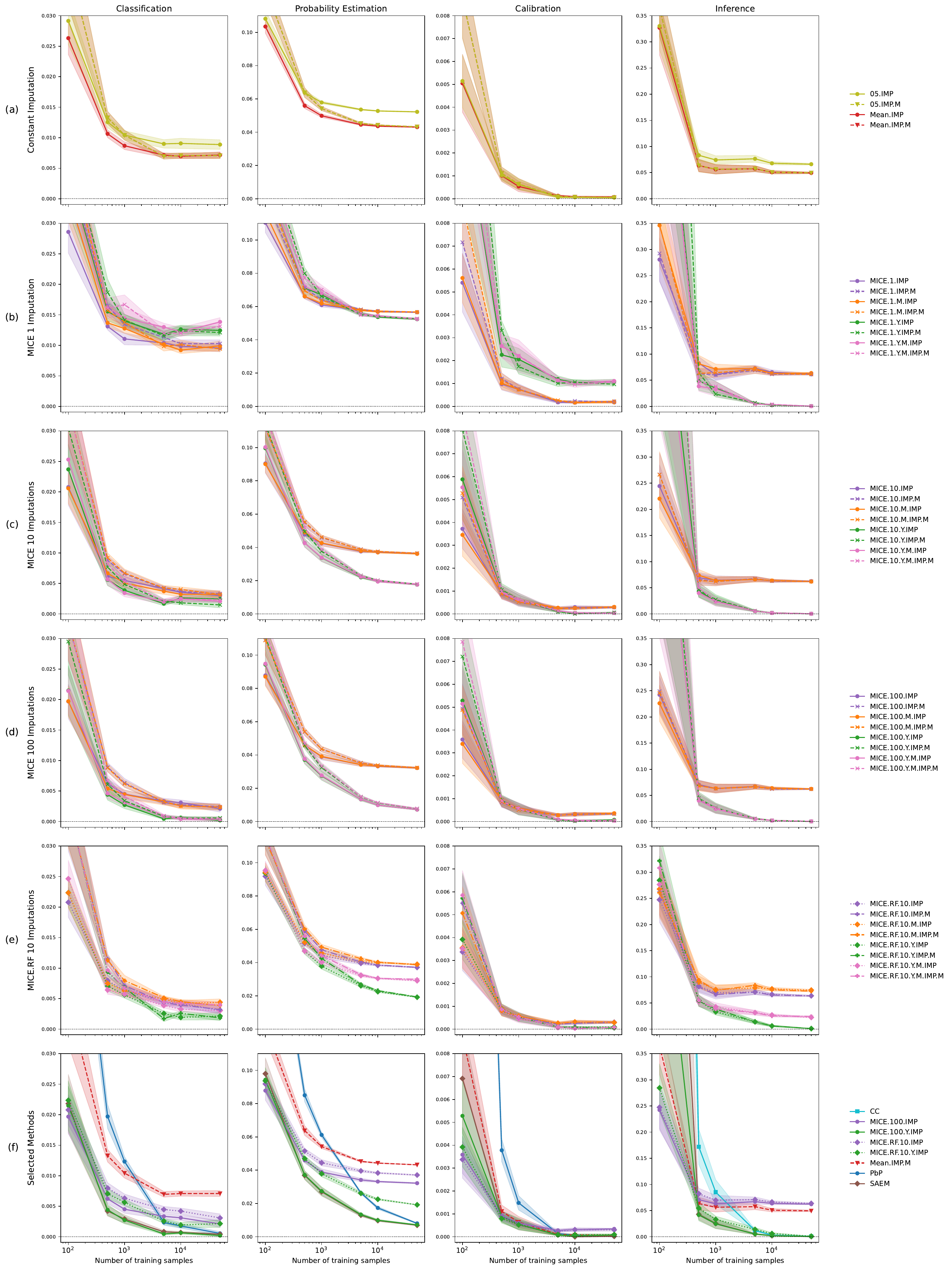}
\vspace{-0.1cm}
\caption{Aggregate results of the simulation GPMM (MAR). A panel of methods are evaluated on Classification (via misclassification rate), Probability Estimation (via MAE from Bayes probabilities), Calibration (via CORP-MCB), and Inference (via MSE of $\widehat{\beta}$). Mean and standard errors over 10 replicates are displayed.}
\label{fig:SimE_ALL}
\end{figure*}

\clearpage
\subsection{Gaussian Pattern Mixture Model (MNAR)}

\begin{figure*}[h!]
\centering
\includegraphics[width=0.85\textwidth]{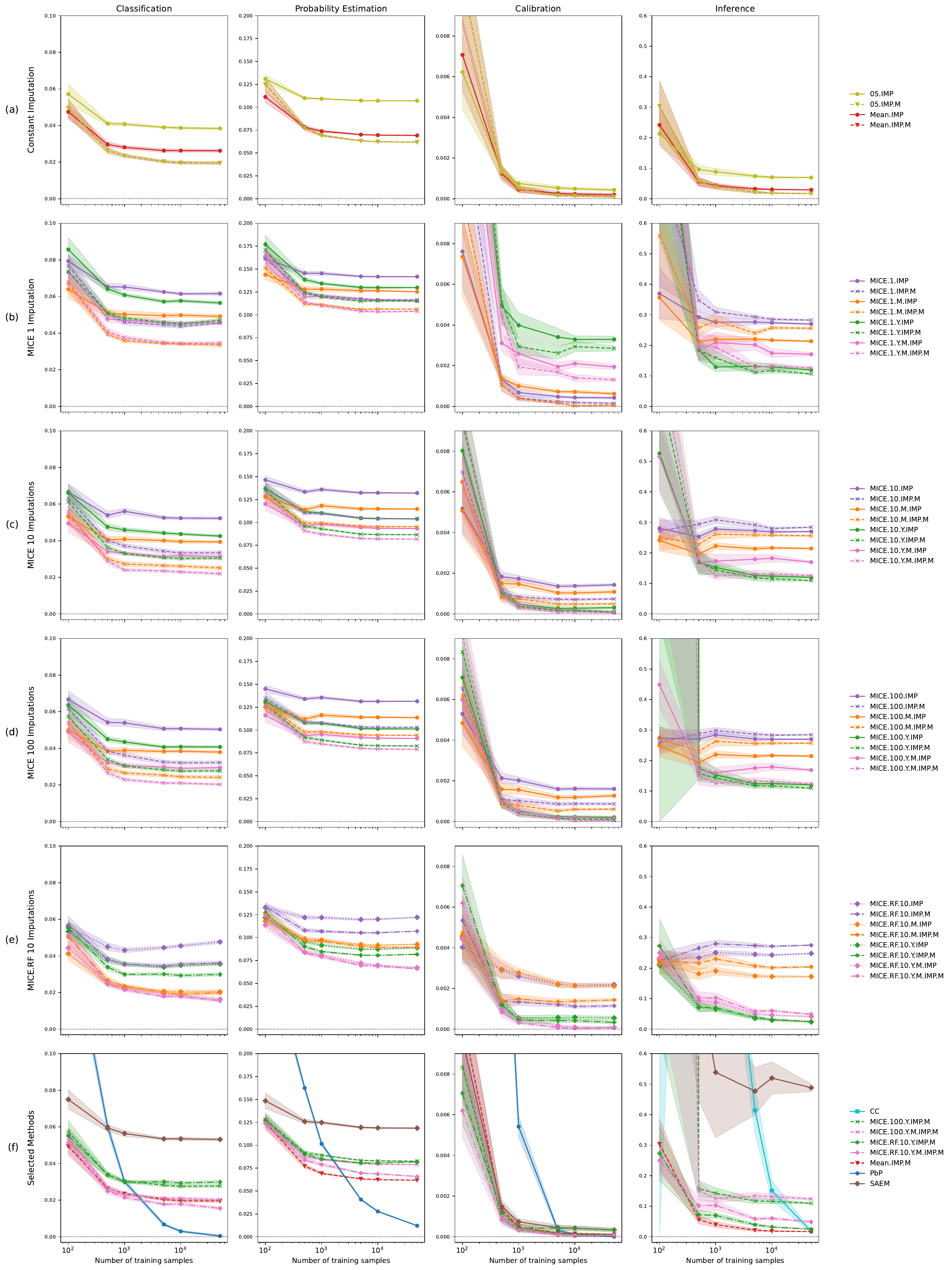}
\vspace{-0.1cm}
\caption{Aggregate results of the simulation GPMM (MNAR). A panel of methods are evaluated on Classification (via misclassification rate), Probability Estimation (via MAE from Bayes probabilities), Calibration (via CORP-MCB), and Inference (via MSE of $\widehat{\beta}$). Mean and standard errors over 10 replicates are displayed. Note that \texttt{SAEM} did not converge in two replicates for a training sample size of $100$; statistics are therefore based on the remaining 8 replicates}
\label{fig:SimG_ALL}
\end{figure*}

\begin{figure*}[ht!]
\centering
\includegraphics[width=\textwidth]{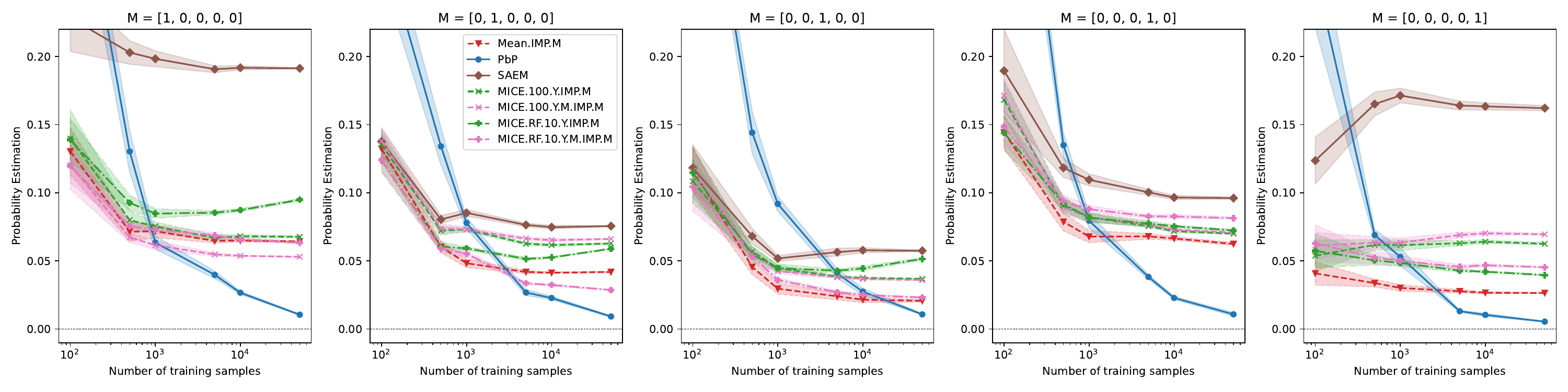}
\vspace{-0.2cm}
\caption{Performances of selected procedures in terms of MAE from Bayes probabilities. The results are displayed by missing pattern in the test set (with one missing index: [1,0,0,0,0], ..., [0,0,0,0,1]). Means and standard errors over 10 replicates are displayed.}
\label{fig:SimG_MAE_patterns}
\end{figure*}

\clearpage
\subsection{Non-linear features (MCAR)}
\label{sec:app_nonlinearfeatures}
The feature vector $\mathbf{Z}$ is defined as 
\begin{itemize}
    \item $Z_1 = \X_1$ and $Z_2 = \X_2$ (identity transformations).
    \item $Z_3 = \exp(\X_3) + c_3$.
    \item $Z_4 = (\X_4)^3$.
    \item $Z_5 = c_5 + \begin{cases} (\X_5)^2 & \text{if } \X_5 \geq 0 \\ -10 \exp(\X_5) & \text{if } \X_5 < 0.\end{cases}$
\end{itemize}
The constants $c_3$ and $c_5$ are chosen to ensure that the mean of each transformed feature remains approximately zero, contributing to a balanced outcome distribution ($\P(Y=1) \approx 0.5$): $c_3 = (-1.67)$ and $c_4 = 2$. The distribution of these features relative to $Z_1$ is visualized in \Cref{fig:SimC_Xi_vs_X0}. 

\begin{figure*}[h]
\centering
\includegraphics[width=\textwidth]{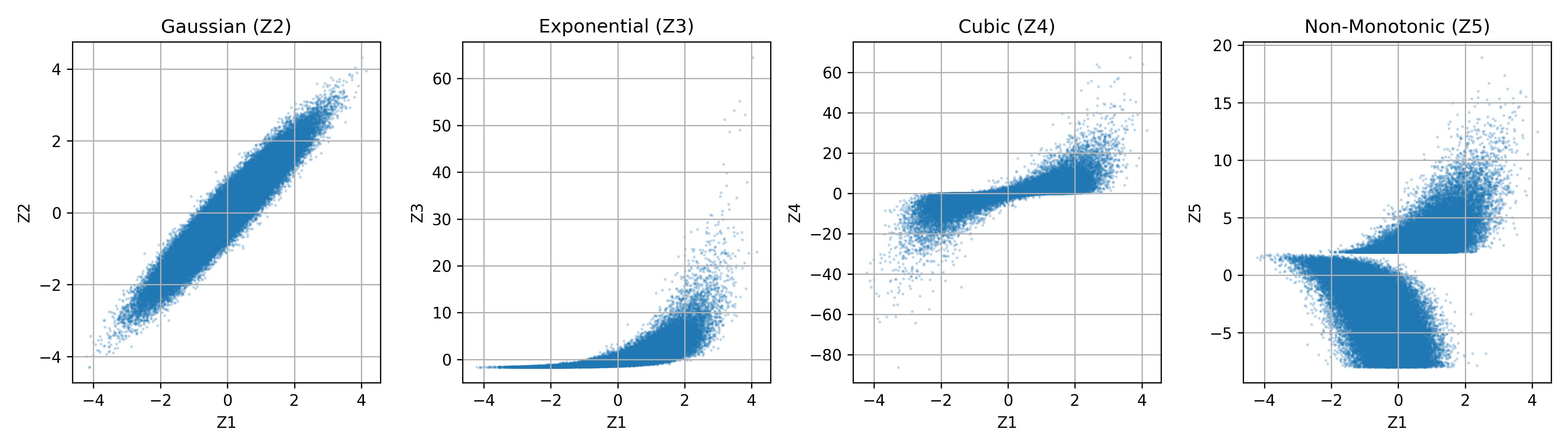}
\vspace{-0.2cm}
\caption{Visualization of the transformed features $Z_2, Z_3, Z_4, Z_5$ against the Gaussian feature $Z_1$ in the non-linear simulation. Each subplot illustrates a different transformation type: Gaussian (identity), Exponential, Cubic, and Non-Monotonic, for the first replicate of the simulation with non-linear features and MCAR missingness described in \Cref{sec:methodo_SimA}.}
\label{fig:SimC_Xi_vs_X0}
\end{figure*}

\begin{figure*}[ht!]
\centering
\includegraphics[width=\textwidth]{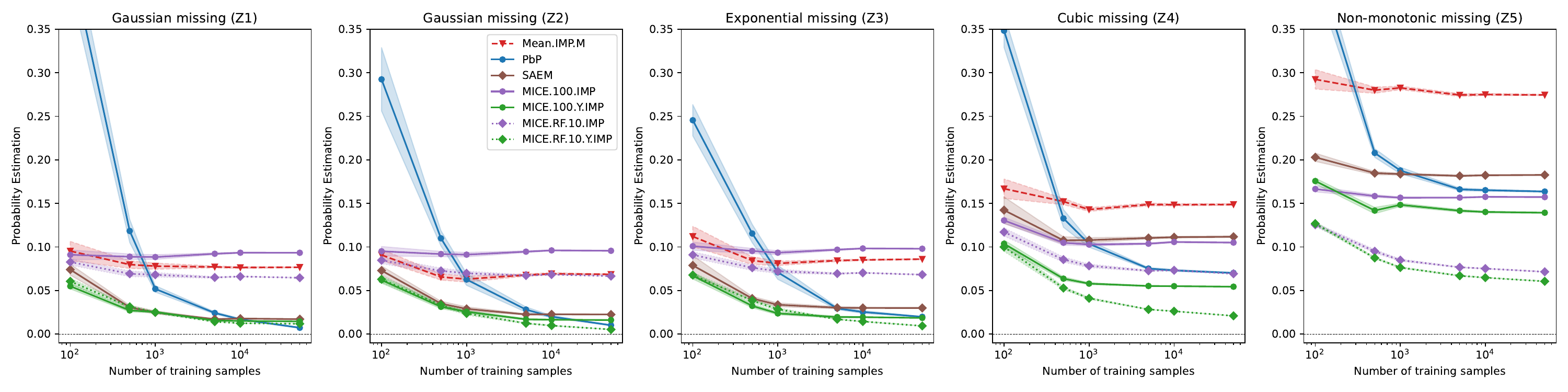}
\vspace{-0.2cm}
\caption{Performances of selected procedures in terms of MAE from Bayes probabilities. The results are displayed by missing pattern in the test set (with one missing index: [1,0,0,0,0], ..., [0,0,0,0,1]). Means and standard errors over 10 replicates of non-linear features with MCAR missingness are displayed (see \Cref{sec:methodo_SimA}).}
\label{fig:SimC_MAE_patterns}
\end{figure*}

\begin{figure*}
\centering
\includegraphics[width=0.85\textwidth]{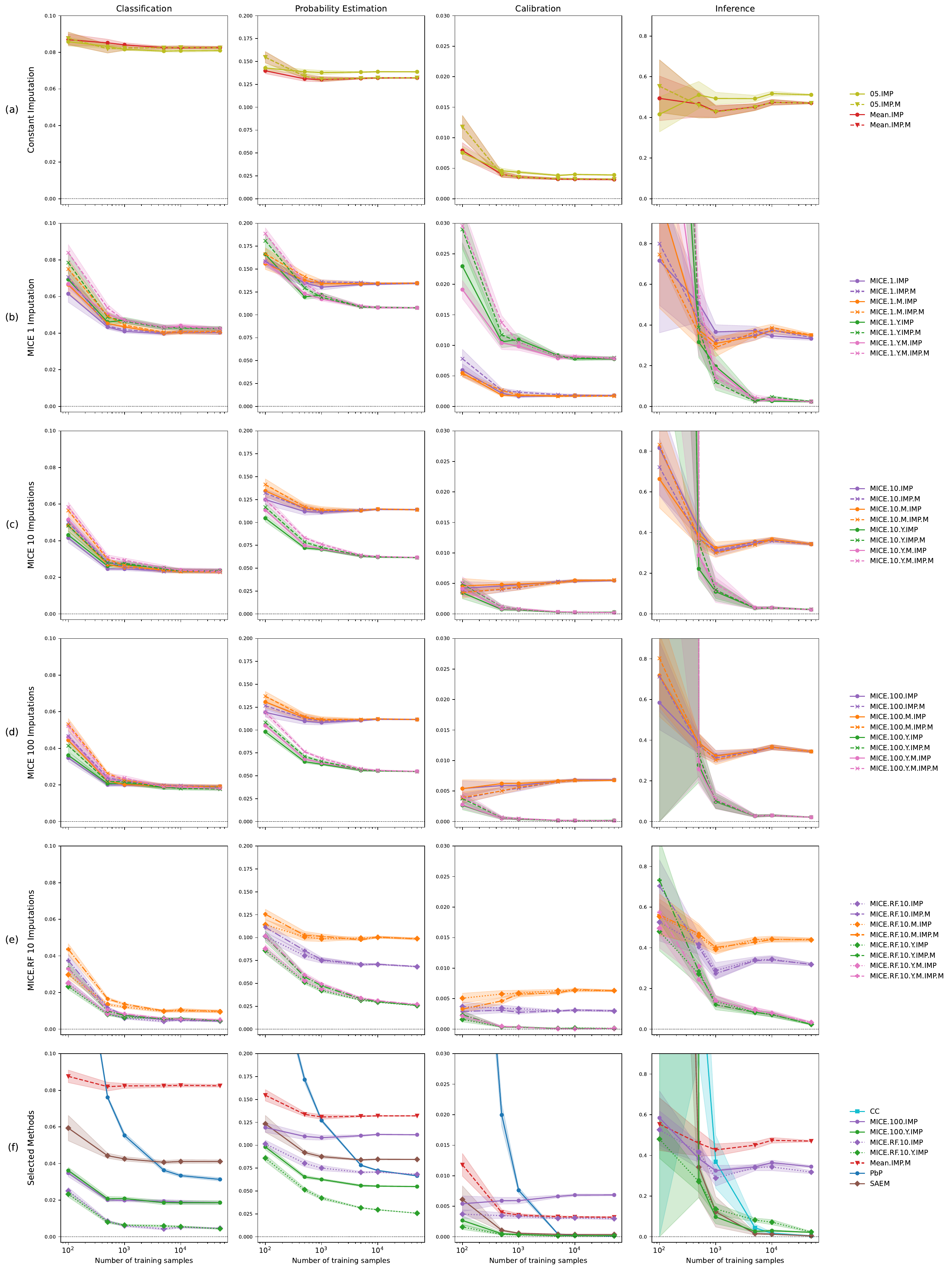}
\vspace{-0.1cm}
\caption{Aggregate results of the simulation Non-linear (MCAR). A panel of methods are evaluated on Classification (via misclassification rate), Probability Estimation (via MAE from Bayes probabilities), Calibration (via CORP-MCB), and Inference (via MSE of $\widehat{\beta}$). Mean and standard errors over 10 replicates are displayed. Note that \texttt{SAEM} did not converge in three replicates for a training sample size of $100$; statistics are therefore based on the remaining 7 replicates.}
\label{fig:SimC_ALL}
\end{figure*}

\clearpage
\subsection{Non-linear features (MNAR)}
\label{app:non_linear_MNAR}

\subsubsection{Methodology}

In this scenario, we combine the non-linear feature transformations from \Cref{sec:app_nonlinearfeatures} with a Missing Not At Random (MNAR) mechanism. 

\paragraph{Data Generation:} We generate the covariates $X$ using the same non-linear transformations of a latent Gaussian vector $Z$ as described in \Cref{sec:app_nonlinearfeatures}. Specifically, $Z \sim \mathcal{N}(0, \Sigma)$ with a Toeplitz covariance structure ($\rho=0.95$), and $X$ is obtained via component-wise transformations (identity, exponential, cubic, and piecewise functions). The outcome $Y$ is generated according to a logistic model given the complete $X$:

$$\mathbb{P}(Y=1|X) = \sigma(\beta_0 + \beta^\top X).$$

\paragraph{Missingness Mechanism:} Unlike the MCAR setting in \Cref{sec:app_nonlinearfeatures}, the missingness mask $M$ is generated using a self-masking mechanism where the probability of missingness depends on the values of $X$ itself. For each observation $i$, the missingness probabilities are determined by a logistic model:

$$\mathbb{P}(M_{ij}=1 | X_i) = \sigma(\gamma_0 + \gamma_1 \tilde{X}_{ij}),$$

where $\tilde{X}$ is the standardized version of $X$ (column-wise). We set the intercept $\gamma_0 = -0.5$ and slope $\gamma_1 = 1.0$. This creates a self-masking MNAR mechanism where extreme values of $X$ are more likely to be missing. This mechanism leads to $37.91\%$ of missing values on average (with s.d. $<0.05$ over $10$ replicates).

\paragraph{Bayes Probability Estimation:} Since the analytical derivation of the Bayes probabilities $\mathbb{P}(Y=1 | X_{obs}, M)$ is intractable under this complex non-linear MNAR scheme, we approximate them numerically. We generate a large auxiliary dataset ($N=100,000$) following the same distribution. We then train a Random Forest classifier (200 estimators, max depth 15) on this auxiliary data to predict $Y$ given $(X_{obs}, M)$ (where missing values in $X$ are handled by the Random Forest's internal mechanism). This model is treated as the ground truth Bayes predictor for evaluating the performance of the methods on the test set.

\subsubsection{Results}
\Cref{fig:SimNLMNAR_ALL} displays the performance of a panel of methods in this non-gaussian MNAR setting. 

\paragraph{Superiority of Non-linear imputation}
Consistent with the GPMM-MNAR simulation (\Cref{fig:SimG_ALL}), non-linear imputation methods (MICE with Random Forest) are the best performing method. This is expected given the non-linear nature of the covariates.

\paragraph{Impact of the Mask and Labels}
As for Non-linear MCAR setting (\Cref{fig:SimC_ALL}), the incorporation of the labels $Y$ and of the missingness mask \emph{before} the imputation improves considerably the performance of MICE techniques.

\paragraph{SAEM \& PbP}
\texttt{SAEM} suffers from the non-linearity of the covariates and shows poor performance across all metrics of interest, as expected. On the other hand, \texttt{PbP} shows competitive results for high sample size, approaching the results of the more advanced MICE methods.

\begin{figure*}
\centering
\includegraphics[width=0.85\textwidth]{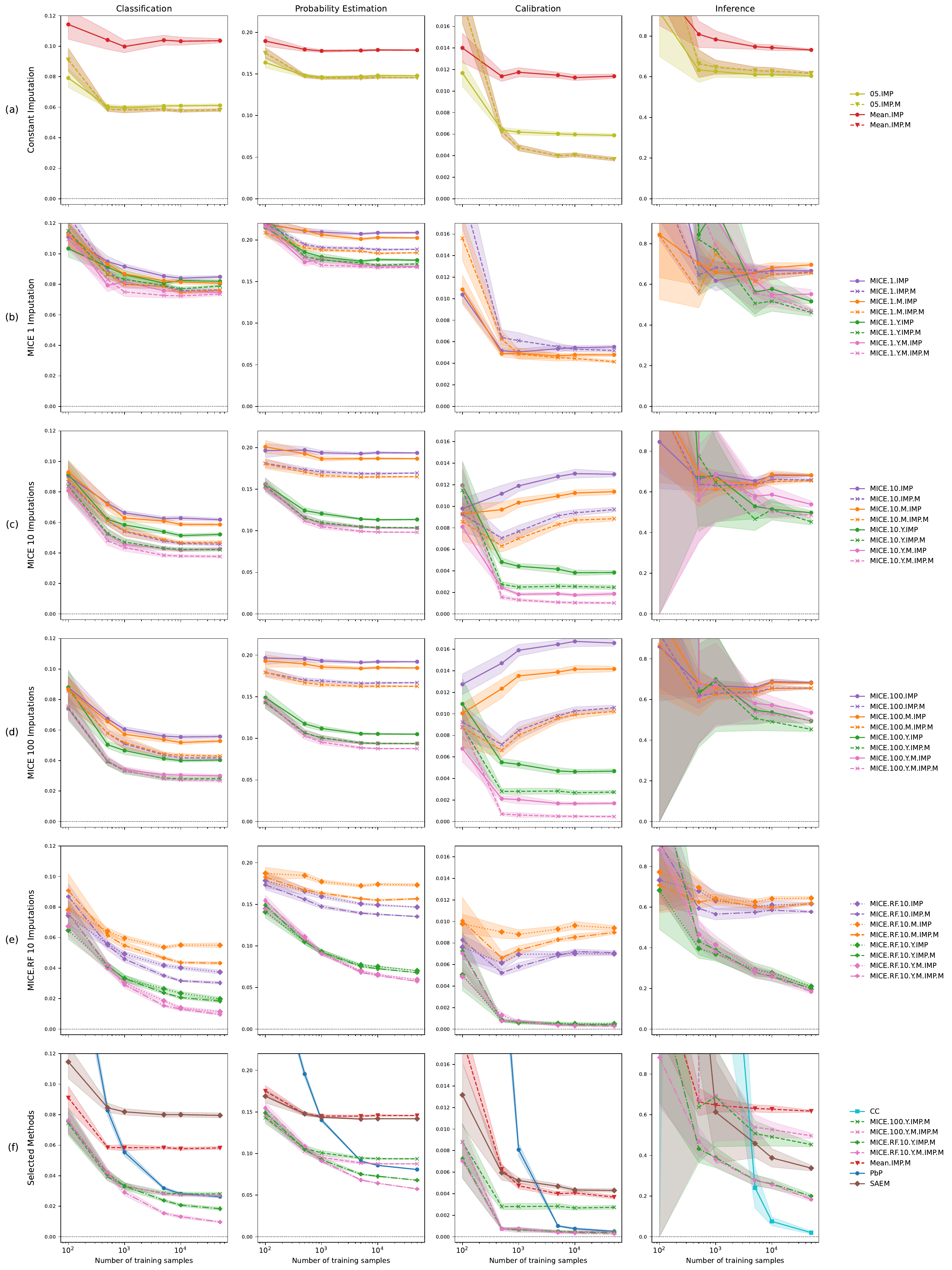}
\vspace{-0.1cm}
\caption{Aggregate results of the simulation Non-linear (MNAR). A panel of methods are evaluated on Classification (via misclassification rate), Probability Estimation (via MAE from Bayes probabilities), Calibration (via CORP-MCB), and Inference (via MSE of $\widehat{\beta}$). Mean and standard errors over 10 replicates are displayed. Note that \texttt{SAEM} did not converge in five replicates for a training sample size of $100$; statistics are therefore based on the remaining 5 replicates.}
\label{fig:SimNLMNAR_ALL}
\end{figure*}

\clearpage
\section{Summary Tables}

\begin{table}[h!]
\centering
\scriptsize
\caption{Average misclassification rates and training times for selected methods across five simulations. Scores within one standard deviation of the best (minimum) mean are in \textbf{bold}, and those within two standard deviations are \underline{underlined}.}
\begin{tabular}{l ccccc c ccccc c}
\toprule
\multirow{2}{*}{Methods}
& \multicolumn{6}{c}{Low sample size ($N=100$)} & \multicolumn{6}{c}{High sample size ($N=50,000$)} \\
\cmidrule(lr){2-7} \cmidrule(lr){8-13}
& \shortstack{GPMM\\MCAR} & \shortstack{GPMM\\MAR} & \shortstack{GPMM\\MNAR} & \shortstack{NL\\MCAR} & \shortstack{NL\\MNAR} & Time & \shortstack{GPMM\\MCAR} & \shortstack{GPMM\\MAR} & \shortstack{GPMM\\MNAR} & \shortstack{NL\\MCAR} & \shortstack{NL\\MNAR} & Time \\
\midrule
{\tiny \texttt{PbP}} & 0.12 & 0.07 & 0.17 & 0.21 & 0.22 & 0.07 & \textbf{0.000} & \textbf{0.001} & \textbf{0.000} & 0.031 & 0.026 & 4.5 \\
{\tiny \texttt{SAEM}} & \textbf{0.02} & \textbf{0.02} & 0.07 & 0.06 & 0.11 & 3.01 & \textbf{0.001} & \textbf{0.000} & 0.053 & 0.041 & 0.080 & 234.3 \\
{\tiny \texttt{Mean.IMP}} & \underline{0.04} & \textbf{0.03} & \textbf{0.05} & 0.09 & 0.11 & \textbf{0.01} & 0.015 & 0.007 & 0.026 & 0.083 & 0.104 & \textbf{0.3} \\
{\tiny \texttt{Mean.IMP.M}} & 0.05 & 0.04 & \textbf{0.05} & 0.09 & \underline{0.09} & \textbf{0.02} & 0.016 & 0.007 & 0.020 & 0.082 & 0.058 & 0.4 \\
{\tiny \texttt{MICE.1.IMP}} & 0.05 & \underline{0.03} & 0.08 & 0.06 & 0.11 & 0.33 & 0.024 & 0.010 & 0.062 & 0.040 & 0.085 & 1.9 \\
{\tiny \texttt{MICE.1.Y.IMP}} & 0.05 & 0.04 & 0.09 & 0.07 & 0.10 & 0.34 & 0.026 & 0.012 & 0.056 & 0.043 & 0.082 & 2.6 \\
{\tiny \texttt{MICE.1.Y.M.IMP.M}} & 0.05 & 0.04 & \underline{0.07} & 0.08 & 0.11 & 0.43 & 0.026 & 0.013 & 0.035 & 0.043 & 0.074 & 3.3 \\
{\tiny \texttt{MICE.100.IMP}} & \textbf{0.03} & \textbf{0.02} & \underline{0.07} & 0.03 & \underline{0.09} & 15.54 & \textbf{0.001} & 0.002 & 0.050 & 0.019 & 0.056 & 112.8 \\
{\tiny \texttt{MICE.100.Y.IMP}} & \textbf{0.02} & \textbf{0.02} & \underline{0.06} & 0.04 & \underline{0.09} & 20.55 & \textbf{0.000} & \textbf{0.000} & 0.041 & 0.019 & 0.040 & 146.9 \\
{\tiny \texttt{MICE.100.Y.M.IMP.M}} & \underline{0.04} & \underline{0.03} & \textbf{0.05} & 0.05 & \textbf{0.08} & 27.99 & \textbf{0.000} & \textbf{0.000} & 0.020 & 0.018 & 0.027 & 201.8 \\
{\tiny \texttt{MICE.RF.10.IMP}} & \underline{0.03} & \textbf{0.02} & \textbf{0.06} & \textbf{0.03} & \textbf{0.07} & 30.58 & 0.006 & 0.003 & 0.048 & \textbf{0.005} & 0.037 & 458.7 \\
{\tiny \texttt{MICE.RF.10.Y.IMP}} & \underline{0.03} & \textbf{0.02} & \textbf{0.05} & \textbf{0.02} & \textbf{0.06} & 46.58 & 0.004 & 0.002 & 0.036 & \textbf{0.004} & 0.020 & 504.8 \\
{\tiny \texttt{MICE.RF.10.Y.M.IMP.M}} & 0.05 & \underline{0.04} & \textbf{0.05} & \underline{0.03} & \textbf{0.08} & 48.51 & 0.004 & 0.004 & 0.016 & \textbf{0.005} & \textbf{0.010} & 471.6 \\
\bottomrule
\end{tabular}
\label{tab:misclassification_summary}
\end{table}

\begin{table}[h!]
\centering
\scriptsize
\caption{Average probability estimation MAEs and training times for selected methods across five simulations. Scores within one standard deviation of the best (minimum) mean are in \textbf{bold}, and those within two standard deviations are \underline{underlined}.}
\begin{tabular}{l ccccc c ccccc c}
\toprule
\multirow{2}{*}{Methods}
& \multicolumn{6}{c}{Low sample size ($N=100$)} & \multicolumn{6}{c}{High sample size ($N=50,000$)} \\
\cmidrule(lr){2-7} \cmidrule(lr){8-13}
& \shortstack{GPMM\\MCAR} & \shortstack{GPMM\\MAR} & \shortstack{GPMM\\MNAR} & \shortstack{NL\\MCAR} & \shortstack{NL\\MNAR} & Time & \shortstack{GPMM\\MCAR} & \shortstack{GPMM\\MAR} & \shortstack{GPMM\\MNAR} & \shortstack{NL\\MCAR} & \shortstack{NL\\MNAR} & Time \\
\midrule
{\tiny \texttt{PbP}} & 0.33 & 0.23 & 0.3 & 0.34 & 0.4 & 0.07 & 0.01 & \underline{0.008} & \textbf{0.01} & 0.07 & 0.08 & 4.5 \\
{\tiny \texttt{SAEM}} & \underline{0.11} & \textbf{0.10} & \underline{0.1} & 0.12 & \underline{0.2} & 3.01 & \textbf{0.01} & \textbf{0.007} & 0.12 & 0.08 & 0.14 & 234.3 \\
{\tiny \texttt{Mean.IMP}} & \underline{0.12} & \textbf{0.10} & \textbf{0.1} & 0.14 & 0.2 & \textbf{0.01} & 0.08 & 0.043 & 0.07 & 0.13 & 0.18 & \textbf{0.3} \\
{\tiny \texttt{Mean.IMP.M}} & 0.16 & 0.13 & \textbf{0.1} & 0.15 & \underline{0.2} & \textbf{0.02} & 0.08 & 0.043 & 0.06 & 0.13 & 0.15 & 0.4 \\
{\tiny \texttt{MICE.1.IMP}} & 0.14 & \underline{0.11} & 0.2 & 0.16 & 0.2 & 0.33 & 0.10 & 0.057 & 0.14 & 0.13 & 0.21 & 1.9 \\
{\tiny \texttt{MICE.1.Y.IMP}} & 0.17 & 0.14 & 0.2 & 0.17 & 0.2 & 0.34 & 0.10 & 0.052 & 0.13 & 0.11 & 0.18 & 2.6 \\
{\tiny \texttt{MICE.1.Y.M.IMP.M}} & 0.20 & 0.15 & 0.2 & 0.19 & 0.2 & 0.43 & 0.10 & 0.053 & 0.10 & 0.11 & 0.17 & 3.3 \\
{\tiny \texttt{MICE.100.IMP}} & \textbf{0.09} & \textbf{0.09} & \underline{0.1} & 0.12 & 0.2 & 15.54 & 0.06 & 0.032 & 0.13 & 0.11 & 0.19 & 112.8 \\
{\tiny \texttt{MICE.100.Y.IMP}} & \textbf{0.10} & \textbf{0.09} & \underline{0.1} & \underline{0.10} & \textbf{0.1} & 20.55 & \underline{0.01} & \textbf{0.007} & 0.10 & 0.05 & 0.10 & 146.9 \\
{\tiny \texttt{MICE.100.Y.M.IMP.M}} & 0.14 & \underline{0.11} & \textbf{0.1} & 0.12 & \textbf{0.1} & 27.99 & 0.01 & \underline{0.008} & 0.08 & 0.05 & 0.09 & 201.8 \\
{\tiny \texttt{MICE.RF.10.IMP}} & \textbf{0.10} & \textbf{0.09} & \underline{0.1} & \underline{0.10} & \underline{0.2} & 30.58 & 0.07 & 0.037 & 0.12 & 0.07 & 0.15 & 458.7 \\
{\tiny \texttt{MICE.RF.10.Y.IMP}} & \textbf{0.10} & \textbf{0.09} & \textbf{0.1} & \textbf{0.09} & \textbf{0.1} & 46.58 & 0.03 & 0.019 & 0.09 & \textbf{0.03} & 0.07 & 504.8 \\
{\tiny \texttt{MICE.RF.10.Y.M.IMP.M}} & 0.14 & \underline{0.12} & \textbf{0.1} & \underline{0.10} & \textbf{0.2} & 48.51 & 0.03 & 0.030 & 0.07 & 0.03 & \textbf{0.06} & 471.6 \\
\bottomrule
\end{tabular}
\label{tab:mae_bayes_summary}
\end{table}

\begin{table}[h!]
\centering
\scriptsize
\caption{Average calibration scores and training times for selected methods across five simulations. Scores within one standard deviation of the best (minimum) mean are in \textbf{bold}, and those within two standard deviations are \underline{underlined}.}
\begin{tabular}{l ccccc c ccccc c}
\toprule
\multirow{2}{*}{Methods}
& \multicolumn{6}{c}{Low sample size ($N=100$)} & \multicolumn{6}{c}{High sample size ($N=50,000$)} \\
\cmidrule(lr){2-7} \cmidrule(lr){8-13}
& \shortstack{GPMM\\MCAR} & \shortstack{GPMM\\MAR} & \shortstack{GPMM\\MNAR} & \shortstack{NL\\MCAR} & \shortstack{NL\\MNAR} & Time & \shortstack{GPMM\\MCAR} & \shortstack{GPMM\\MAR} & \shortstack{GPMM\\MNAR} & \shortstack{NL\\MCAR} & \shortstack{NL\\MNAR} & Time \\
\midrule
{\tiny \texttt{PbP}} & 0.120 & 0.052 & 0.100 & 0.101 & 0.126 & 0.07 & \textbf{0.000} & \textbf{0.000} & \textbf{-0.000} & \textbf{0.000} & \underline{0.001} & 4.5 \\
{\tiny \texttt{SAEM}} & \underline{0.009} & \underline{0.007} & 0.010 & 0.006 & \underline{0.013} & 3.01 & \textbf{0.000} & \textbf{0.000} & 0.000 & \underline{0.000} & 0.004 & 234.3 \\
{\tiny \texttt{Mean.IMP}} & \underline{0.008} & \textbf{0.005} & \underline{0.007} & 0.008 & \underline{0.014} & \textbf{0.01} & \textbf{0.000} & \textbf{0.000} & 0.000 & 0.003 & 0.011 & \textbf{0.3} \\
{\tiny \texttt{Mean.IMP.M}} & 0.016 & 0.009 & 0.011 & 0.012 & 0.018 & \textbf{0.02} & \textbf{0.000} & \textbf{0.000} & \underline{0.000} & 0.003 & 0.004 & 0.4 \\
{\tiny \texttt{MICE.1.IMP}} & \underline{0.008} & \textbf{0.005} & \underline{0.008} & 0.006 & \underline{0.010} & 0.33 & \textbf{0.000} & 0.000 & 0.000 & 0.002 & 0.006 & 1.9 \\
{\tiny \texttt{MICE.1.Y.IMP}} & 0.026 & 0.013 & 0.022 & 0.023 & 0.036 & 0.34 & 0.006 & 0.001 & 0.003 & 0.008 & 0.020 & 2.6 \\
{\tiny \texttt{MICE.1.Y.M.IMP.M}} & 0.036 & 0.016 & 0.023 & 0.030 & 0.041 & 0.43 & 0.006 & 0.001 & 0.001 & 0.008 & 0.019 & 3.3 \\
{\tiny \texttt{MICE.100.IMP}} & \textbf{0.003} & \textbf{0.004} & \textbf{0.005} & 0.005 & \underline{0.013} & 15.54 & 0.004 & 0.000 & 0.002 & 0.007 & 0.017 & 112.8 \\
{\tiny \texttt{MICE.100.Y.IMP}} & \underline{0.006} & \textbf{0.005} & \underline{0.007} & \textbf{0.003} & \underline{0.011} & 20.55 & \textbf{-0.000} & \textbf{0.000} & 0.000 & \textbf{0.000} & 0.005 & 146.9 \\
{\tiny \texttt{MICE.100.Y.M.IMP.M}} & 0.014 & \underline{0.008} & 0.009 & \underline{0.004} & \textbf{0.009} & 27.99 & \textbf{-0.000} & \textbf{0.000} & \textbf{0.000} & \textbf{0.000} & \underline{0.000} & 201.8 \\
{\tiny \texttt{MICE.RF.10.IMP}} & \textbf{0.003} & \textbf{0.003} & \textbf{0.004} & \underline{0.004} & \textbf{0.008} & 30.58 & 0.003 & 0.000 & 0.002 & 0.003 & 0.007 & 458.7 \\
{\tiny \texttt{MICE.RF.10.Y.IMP}} & \textbf{0.004} & \textbf{0.004} & \textbf{0.005} & \textbf{0.002} & \textbf{0.005} & 46.58 & 0.000 & \underline{0.000} & 0.001 & \textbf{0.000} & \underline{0.001} & 504.8 \\
{\tiny \texttt{MICE.RF.10.Y.M.IMP.M}} & \underline{0.010} & \textbf{0.006} & \textbf{0.006} & \textbf{0.002} & \textbf{0.007} & 48.51 & \textbf{0.000} & \textbf{0.000} & \textbf{0.000} & \textbf{0.000} & \textbf{0.000} & 471.6 \\
\bottomrule
\end{tabular}
\label{tab:calibration_summary}
\end{table}

\begin{table}[h!]
\centering
\scriptsize
\caption{Average MSEs from $\beta^\star$ and training times for selected methods across five simulations. Scores within one standard deviation of the best (minimum) mean are in \textbf{bold}, and those within two standard deviations are \underline{underlined}.}
\begin{tabular}{l ccccc c ccccc c}
\toprule
\multirow{2}{*}{Methods}
& \multicolumn{6}{c}{Low sample size ($N=100$)} & \multicolumn{6}{c}{High sample size ($N=50,000$)} \\
\cmidrule(lr){2-7} \cmidrule(lr){8-13}
& \shortstack{GPMM\\MCAR} & \shortstack{GPMM\\MAR} & \shortstack{GPMM\\MNAR} & \shortstack{NL\\MCAR} & \shortstack{NL\\MNAR} & Time & \shortstack{GPMM\\MCAR} & \shortstack{GPMM\\MAR} & \shortstack{GPMM\\MNAR} & \shortstack{NL\\MCAR} & \shortstack{NL\\MNAR} & Time \\
\midrule
{\tiny \texttt{CC}} & 4.62 & 3.50 & +1000 & +1000 & +1000 & \textbf{0.007} & 0.001 & \underline{0.001} & \textbf{0.02} & \textbf{0.004} & \textbf{0.02} & \textbf{0.2} \\
{\tiny \texttt{SAEM}} & 1.24 & 1.35 & 2.25 & 7.27 & 4.33 & 3.012 & \textbf{0.001} & \textbf{0.000} & 0.49 & \textbf{0.004} & 0.34 & 234.3 \\
{\tiny \texttt{Mean.IMP}} & \textbf{0.23} & \textbf{0.33} & \textbf{0.24} & \textbf{0.49} & \textbf{1.05} & \underline{0.010} & 0.141 & 0.049 & 0.03 & 0.470 & 0.73 & 0.3 \\
{\tiny \texttt{Mean.IMP.M}} & \textbf{0.27} & \underline{0.36} & \underline{0.30} & \textbf{0.55} & \underline{1.29} & 0.015 & 0.141 & 0.050 & \textbf{0.02} & 0.470 & 0.62 & 0.4 \\
{\tiny \texttt{MICE.1.IMP}} & \textbf{0.26} & \textbf{0.28} & \underline{0.37} & \textbf{0.72} & \underline{1.48} & 0.333 & 0.198 & 0.061 & 0.27 & 0.333 & 0.67 & 1.9 \\
{\tiny \texttt{MICE.1.Y.IMP}} & 1.02 & 0.80 & 1.14 & 3.38 & 27.40 & 0.343 & \textbf{0.001} & \textbf{0.000} & 0.12 & 0.023 & 0.52 & 2.6 \\
{\tiny \texttt{MICE.1.Y.M.IMP.M}} & 1.31 & 0.76 & 1.20 & 4.05 & 4.21 & 0.432 & \textbf{0.001} & \textbf{0.000} & 0.13 & 0.021 & 0.47 & 3.3 \\
{\tiny \texttt{MICE.100.IMP}} & \textbf{0.21} & \textbf{0.24} & \textbf{0.27} & \textbf{0.58} & \textbf{0.86} & 15.545 & 0.197 & 0.062 & 0.27 & 0.345 & 0.68 & 112.8 \\
{\tiny \texttt{MICE.100.Y.IMP}} & 0.79 & 0.73 & +1000 & +1000 & +1000 & 20.548 & \textbf{0.001} & \textbf{0.000} & 0.12 & 0.021 & 0.49 & 146.9 \\
{\tiny \texttt{MICE.100.Y.M.IMP.M}} & 1.14 & 0.88 & 7.38 & +1000 & +1000 & 27.989 & \textbf{0.001} & \textbf{0.000} & 0.12 & 0.021 & 0.50 & 201.8 \\
{\tiny \texttt{MICE.RF.10.IMP}} & \textbf{0.21} & \textbf{0.25} & \textbf{0.23} & \textbf{0.53} & \textbf{0.73} & 30.579 & 0.193 & 0.064 & 0.25 & 0.318 & 0.62 & 458.7 \\
{\tiny \texttt{MICE.RF.10.Y.IMP}} & \textbf{0.20} & \textbf{0.28} & \textbf{0.21} & \textbf{0.48} & \textbf{0.68} & 46.581 & 0.003 & 0.001 & 0.02 & 0.024 & 0.21 & 504.8 \\
{\tiny \texttt{MICE.RF.10.Y.M.IMP.M}} & \textbf{0.28} & \textbf{0.31} & \textbf{0.25} & \textbf{0.57} & \textbf{0.88} & 48.515 & 0.009 & 0.024 & 0.05 & 0.034 & 0.18 & 471.6 \\
\bottomrule
\end{tabular}
\label{tab:mse_error_summary}
\end{table}

\clearpage
\section{Real Datasets}
\label{app:real_datasets}

\subsection{Set-up}

We compiled a collection of real-world datasets containing missing values from two sources: the imputation benchmark by \citet{grzesiak2025needdozensmethodsreal} (see their Table 3) and the \textit{R-miss-tastic} platform \citep{rmisstastic}.

From this initial pool, we applied specific exclusion criteria to ensure suitability for our analysis. We discarded time series datasets (e.g., \textit{tsAirgap}, \textit{tsHeating}, \textit{tsNH4}, and World Bank data), datasets exhibiting censored missingness (e.g., \textit{chorizonDL}), and those with hierarchical missingness structures (\textit{riskfactors}). For datasets appearing in both sources (\textit{oceanbuoys}), we retained the pre-processed version from \citet{grzesiak2025needdozensmethodsreal}.

To formulate binary classification tasks, we either selected an existing binary target column or dichotomized a continuous variable (assigning class 0 if values were below the median, and class 1 otherwise). The specific target variable for each dataset is listed in \Cref{tab:real_data_sets}.

To assess the predictive performance and stability of the various approaches, we conduct Monte Carlo cross-validation with $K=15$ iterations. For each iteration, we randomly split the data into a training set (80\%) and a test set (20\%).

\begin{table}[ht]
\centering
\caption{Summary of missing data statistics per dataset. \textit{Patterns} denotes the count of unique missingness patterns. \textit{Coverage 3} and \textit{Coverage 10} indicate the cumulative proportion of observations (in \%) captured by the 3 and 10 most frequent patterns, respectively.}
\begin{tabular}{lrrrrrrll}
  \hline
File & Rows & Cols & Patterns & Missing (\%) & Coverage 3 & Coverage 10 & Source & Target \\ 
  \hline
    airquality & 153 & 6 & 4 & 4.79 & 98.69 & 100.00 & R-miss-tastic & wind \\ 
    boys & 748 & 8 & 13 & 26.75 & 93.98 & 99.60 & benchmark & age\\ 
    colic & 300 & 24 & 186 & 22.94 & 13.00 & 25.67 & benchmark & outcome\\ 
    debt & 464 & 13 & 44 & 4.24 & 76.29 & 89.66 & benchmark & prodebt\\ 
    diabetes & 768 & 9 & 11 & 9.43 & 94.27 & 99.87 & benchmark & class\\ 
    globwarm & 1001 & 10 & 2 & 8.55 & 100.00 & 100.00 & benchmark & chesapeake\\ 
    housevotes84 & 435 & 17 & 76 & 5.30 & 69.43 & 82.07 & benchmark & class\\ 
    nhanes & 10000 & 72 & 2721 & 37.71 & 3.81 & 10.47 & R-miss-tastic & age\\ 
    oceanbuoys & 736 & 8 & 6 & 3.01 & 99.46 & 100.00 & benchmark & wind-ns\\ 
    ozone & 366 & 13 & 13 & 4.27 & 92.90 & 99.18 & R-miss-tastic & v13\\ 
    pedestrian & 37700 & 6 & 2 & 1.13 & 100.00 & 100.00 & R-miss-tastic & sensor-id\\ 
    popmis & 2000 & 6 & 2 & 7.07 & 100.00 & 100.00 & benchmark & teachpop\\ 
    pulplignin & 301 & 22 & 15 & 5.32 & 93.69 & 98.34 & benchmark & y-kappa\\ 
    sbs5242 & 262 & 9 & 14 & 1.91 & 88.17 & 98.47 & R-miss-tastic & usb\\ 
    selfreport & 2060 & 10 & 2 & 20.00 & 100.00 & 100.00 & benchmark & sex\\ 
    sleep & 62 & 10 & 8 & 6.13 & 87.10 & 100.00 & R-miss-tastic & danger\\ 
    soybean & 683 & 36 & 9 & 9.50 & 92.53 & 100.00 & benchmark & class\\ 
    tbc & 3951 & 9 & 6 & 11.65 & 91.37 & 100.00 & benchmark & sex\\ 
    vnf & 1232 & 14 & 102 & 9.07 & 75.81 & 86.04 & benchmark & q8-1\\ 
    walking & 890 & 5 & 4 & 13.62 & 99.33 & 100.00 & benchmark & sex\\ 
   \hline
\end{tabular}
\label{tab:real_data_sets}
\end{table}






\subsection{Results}

Regarding numerical stability and robustness, we observed that several methods failed to produce valid predictions on specific datasets. In such instances, to maintain a complete comparison, we replaced the failing method's outputs with a constant prediction (resulting in an AUC of $0.5$ and a miscalibration error of $0.0$ by construction). Several specific cases are noteworthy:

First, \texttt{SAEM} and \texttt{05.IMP} are restricted to numerical inputs; consequently, they were trained exclusively on the continuous features. For datasets composed entirely of categorical variables (\textit{housevotes84}, \textit{soybean}, and \textit{vnf}), these methods were unable to fit a model. Second, \texttt{SAEM} exhibited convergence instability on the \textit{walking} dataset, preventing valid inference. Finally, both \texttt{SAEM} and the various iterations of \texttt{MICE} with label-dependent imputation (\texttt{MICE.10.Y.M.IMP} and \texttt{MICE.10.Y.M.IMP.M}) failed to execute on the \textit{nhanes} dataset, likely due to the high dimensionality and complexity of the missingness patterns in that specific case.

Figures \ref{fig:real_data1} and \ref{fig:real_data2} present the comparative performance of selected methods across the real-world datasets, evaluated using ROC-AUC, Brier score, Misclassification rate, and Calibration error. Note that parameter inference and probability estimation metrics are omitted for these datasets, as the ground-truth coefficients and true posterior probabilities are unknown, and real-world data distributions likely deviate from the theoretical logistic model. A detailed breakdown of the AUC results is provided in \Cref{tab:auc_results}.

We observe that the variants of MICE imputation generally emerge as the most robust methods. While the choice of the underlying conditional model (Predictive Mean Matching vs. Random Forest) and the inclusion of a missingness mask \emph{prior} to imputation appear to have a negligible impact, the use of a mask \emph{following} imputation—but before the final logistic regression—has a significant influence on performance. This effect is predominantly negative, as seen in the \textit{sleep}, \textit{sbs5242}, and \textit{boys} datasets, although it occasionally yields improvements, notably in the \textit{oceanbuoys} dataset.

Simpler baseline methods exhibit inconsistent performance; while they lack the overall reliability of the MICE framework, they occasionally achieve competitive results in specific contexts. For instance, \texttt{PbP} achieves the highest AUC in the \textit{pedestrian}, \textit{oceanbuoys}, and \textit{tbc} datasets, but its performance degrades substantially in other settings, such as \textit{ozone} and \textit{diabetes}.


\begin{figure*}
\centering
\includegraphics[width=0.95\textwidth]{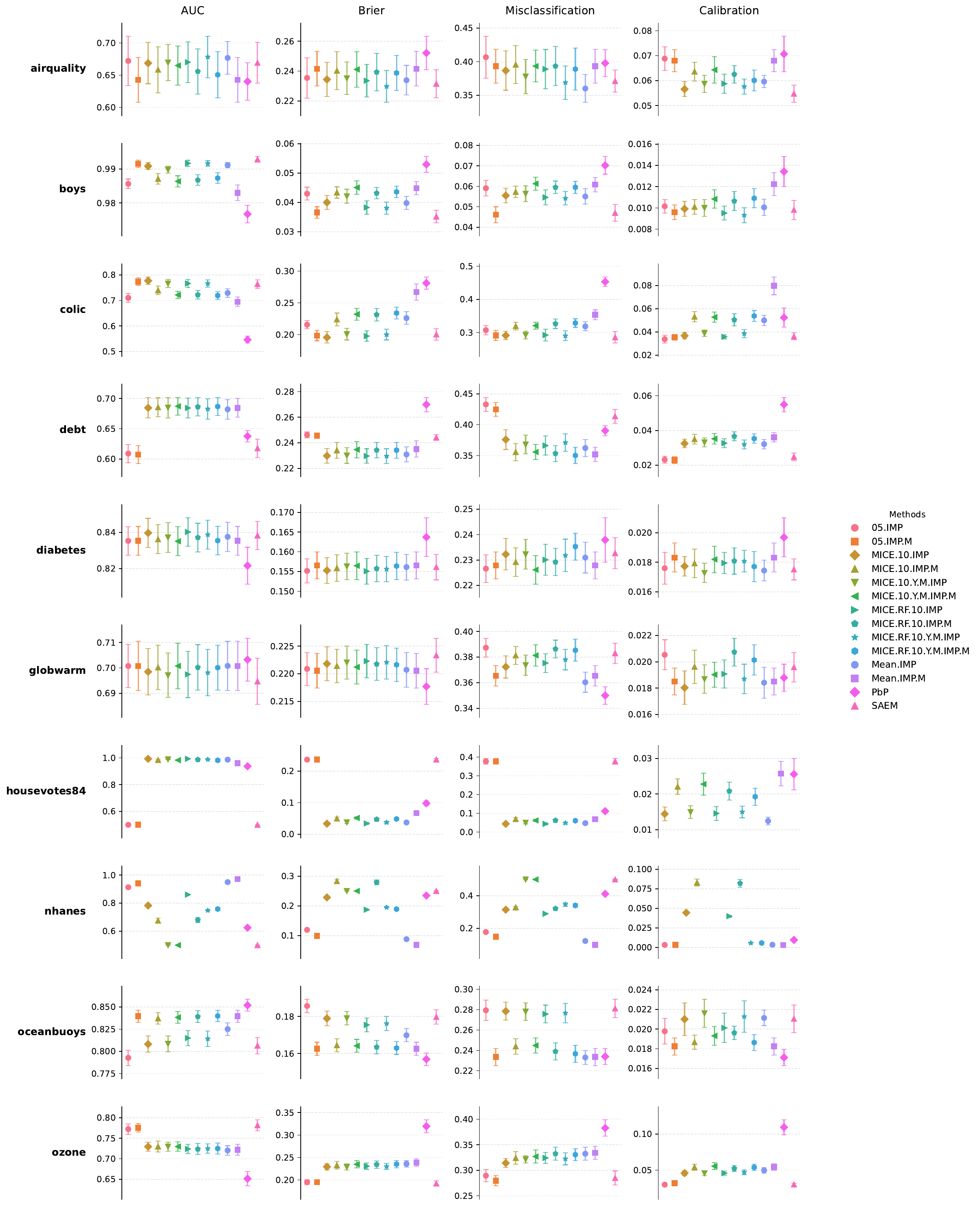}
\vspace{-0.1cm}
\caption{Results on real-world datasets (airquality to ozone). Performances are evaluated on ROC-AUC, Brier score, Misclassification rate, and Calibration error (via CORP-MCB). Points and error bars represent the mean and standard error across $K=15$ Monte Carlo folds.}
\label{fig:real_data1}
\end{figure*}

\begin{figure*}
\centering
\includegraphics[width=0.95\textwidth]{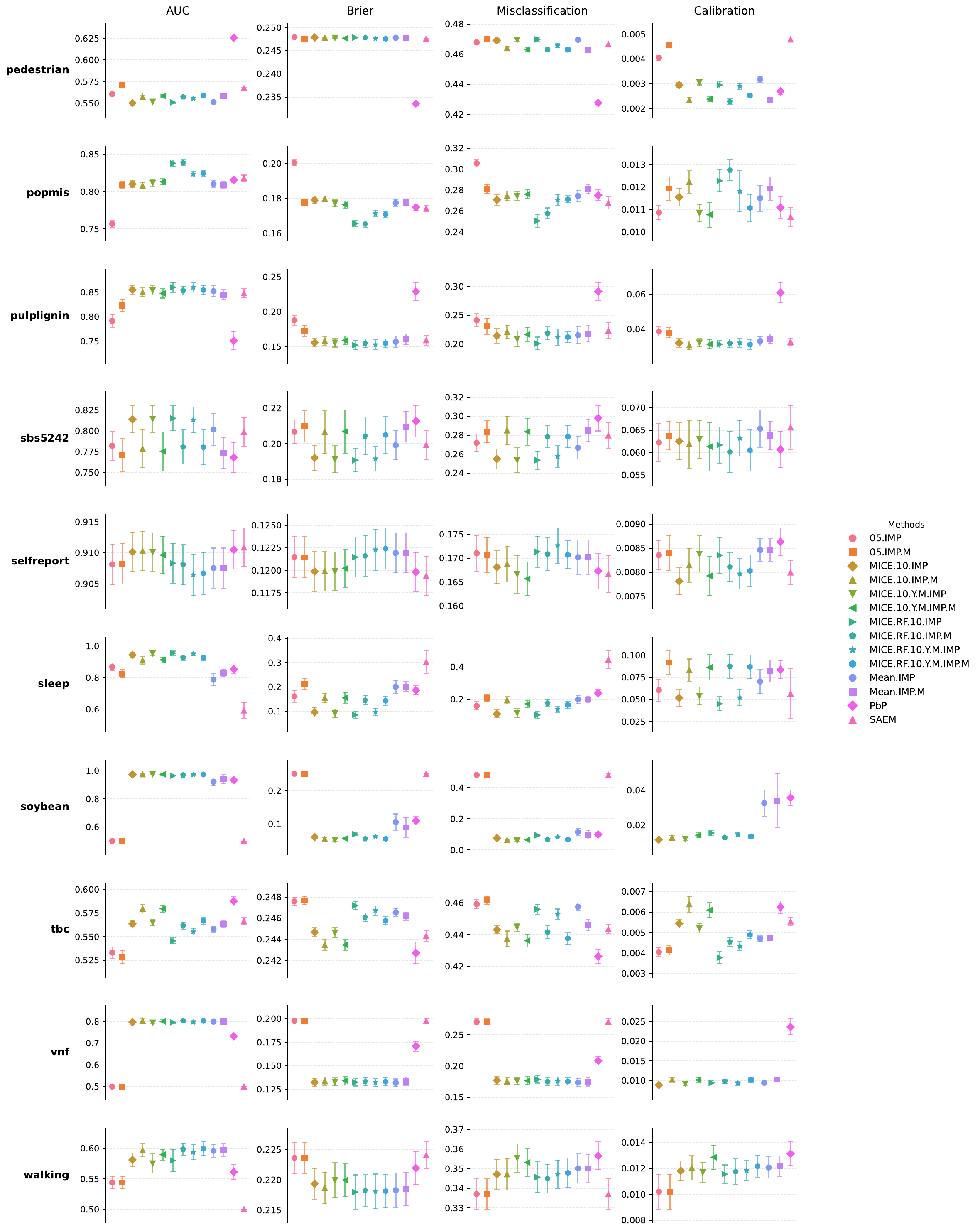}
\vspace{-0.1cm}
\caption{Results on real-world datasets (pedestrian to walking). Performances are evaluated on ROC-AUC, Brier score, Misclassification rate, and Calibration error (via CORP-MCB). Points and error bars represent the mean and standard error across $K=15$ Monte Carlo folds.}
\label{fig:real_data2}
\end{figure*}

\begin{table}[ht]
\centering
\caption{Average AUC results over the 15 Monte-Carlo folds, across real-world datasets. \textbf{Bold} values indicate the best method or those not significantly different (paired t-test, $\alpha = 0.05$). The names of the datasets are abbreviated.}
\label{tab:auc_results}

\begin{tabular}{lllllllllll}
  \hline
Method & air & boy & col & deb & dia & glo & hou & nha & oce & ozo \\ 
  \hline
05.IMP & \textbf{0.67} & 0.99 & 0.71 & 0.61 & 0.84 & \textbf{0.70} & 0.50 & 0.91 & 0.79 & \textbf{0.77} \\ 
  05.IMP.M & 0.64 & \textbf{0.99} & \textbf{0.77} & 0.61 & 0.84 & \textbf{0.70} & 0.50 & 0.94 & 0.84 & \textbf{0.78} \\ 
  MICE.10.IMP & \textbf{0.67} & 0.99 & \textbf{0.78} & \textbf{0.68} & \textbf{0.84} & \textbf{0.70} & \textbf{0.99} & 0.78 & 0.81 & 0.73 \\ 
  MICE.10.IMP.M & \textbf{0.66} & 0.99 & 0.74 & \textbf{0.69} & \textbf{0.84} & \textbf{0.70} & 0.98 & 0.67 & 0.84 & 0.73 \\ 
  MICE.10.Y.M.IMP & \textbf{0.67} & 0.99 & \textbf{0.77} & \textbf{0.68} & 0.84 & \textbf{0.70} & \textbf{0.99} & 0.50 & 0.81 & 0.73 \\ 
  MICE.10.Y.M.IMP.M & \textbf{0.66} & 0.99 & 0.72 & \textbf{0.69} & 0.84 & \textbf{0.70} & 0.98 & 0.50 & 0.84 & 0.73 \\ 
  MICE.RF.10.IMP & \textbf{0.67} & 0.99 & \textbf{0.77} & \textbf{0.68} & \textbf{0.84} & \textbf{0.70} & \textbf{0.99} & 0.86 & 0.81 & 0.72 \\ 
  MICE.RF.10.IMP.M & \textbf{0.66} & 0.99 & 0.72 & \textbf{0.69} & \textbf{0.84} & \textbf{0.70} & 0.99 & 0.68 & 0.84 & 0.72 \\ 
  MICE.RF.10.Y.M.IMP & \textbf{0.68} & 0.99 & \textbf{0.77} & \textbf{0.68} & \textbf{0.84} & \textbf{0.70} & 0.99 & 0.75 & 0.81 & 0.72 \\ 
  MICE.RF.10.Y.M.IMP.M & \textbf{0.65} & 0.99 & 0.72 & \textbf{0.69} & 0.84 & \textbf{0.70} & 0.98 & 0.76 & 0.84 & 0.72 \\ 
  Mean.IMP & \textbf{0.68} & 0.99 & 0.73 & \textbf{0.68} & 0.84 & \textbf{0.70} & 0.99 & 0.95 & 0.82 & 0.72 \\ 
  Mean.IMP.M & 0.64 & 0.98 & 0.69 & \textbf{0.68} & 0.84 & \textbf{0.70} & 0.96 & \textbf{0.97} & 0.84 & 0.72 \\ 
  PbP & 0.64 & 0.98 & 0.55 & 0.64 & 0.82 & \textbf{0.70} & 0.94 & 0.62 & \textbf{0.85} & 0.65 \\ 
  SAEM & \textbf{0.67} & \textbf{0.99} & \textbf{0.76} & 0.62 & 0.84 & 0.69 & 0.50 & 0.50 & 0.81 & \textbf{0.78} \\ 
   \hline
\end{tabular}

\vspace{0.5cm}

\begin{tabular}{lllllllllll}
  \hline
Method & ped & pop & pul & sbs & sel & sle & soy & tbc & vnf & wal \\ 
  \hline
05.IMP & 0.56 & 0.76 & 0.79 & 0.78 & 0.91 & 0.87 & 0.50 & 0.53 & 0.50 & 0.54 \\ 
  05.IMP.M & 0.57 & 0.81 & 0.82 & 0.77 & 0.91 & 0.82 & 0.50 & 0.53 & 0.50 & 0.54 \\ 
  MICE.10.IMP & 0.55 & 0.81 & \textbf{0.86} & \textbf{0.81} & 0.91 & \textbf{0.94} & \textbf{0.97} & 0.56 & \textbf{0.80} & 0.58 \\ 
  MICE.10.IMP.M & 0.56 & 0.81 & 0.85 & 0.78 & 0.91 & 0.91 & \textbf{0.97} & 0.58 & \textbf{0.80} & \textbf{0.60} \\ 
  MICE.10.Y.M.IMP & 0.55 & 0.81 & \textbf{0.85} & \textbf{0.81} & \textbf{0.91} & \textbf{0.95} & \textbf{0.98} & 0.56 & 0.80 & 0.58 \\ 
  MICE.10.Y.M.IMP.M & 0.56 & 0.81 & 0.85 & 0.78 & 0.91 & 0.91 & \textbf{0.97} & 0.58 & 0.80 & 0.59 \\ 
  MICE.RF.10.IMP & 0.55 & \textbf{0.84} & \textbf{0.86} & \textbf{0.82} & 0.91 & \textbf{0.96} & 0.96 & 0.55 & \textbf{0.80} & \textbf{0.58} \\ 
  MICE.RF.10.IMP.M & 0.56 & \textbf{0.84} & \textbf{0.85} & 0.78 & 0.91 & 0.93 & 0.97 & 0.56 & \textbf{0.80} & \textbf{0.60} \\ 
  MICE.RF.10.Y.M.IMP & 0.56 & 0.82 & \textbf{0.86} & \textbf{0.81} & 0.91 & \textbf{0.95} & \textbf{0.97} & 0.56 & \textbf{0.80} & \textbf{0.59} \\ 
  MICE.RF.10.Y.M.IMP.M & 0.56 & 0.82 & \textbf{0.85} & 0.78 & 0.91 & 0.93 & \textbf{0.97} & 0.57 & \textbf{0.80} & \textbf{0.60} \\ 
  Mean.IMP & 0.55 & 0.81 & 0.85 & \textbf{0.80} & 0.91 & 0.79 & 0.92 & 0.56 & \textbf{0.80} & \textbf{0.60} \\ 
  Mean.IMP.M & 0.56 & 0.81 & 0.84 & 0.77 & 0.91 & 0.83 & \textbf{0.94} & 0.56 & 0.80 & \textbf{0.60} \\ 
  PbP & \textbf{0.63} & 0.82 & 0.75 & 0.77 & \textbf{0.91} & 0.85 & 0.93 & \textbf{0.59} & 0.73 & 0.56 \\ 
  SAEM & 0.57 & 0.82 & 0.85 & 0.80 & \textbf{0.91} & 0.59 & 0.50 & 0.57 & 0.50 & 0.50 \\ 
   \hline
\end{tabular}

\end{table}

\clearpage
\section{Miscellaneous}

\begin{figure*}[ht!]
\centering
\includegraphics[width=0.75\textwidth]{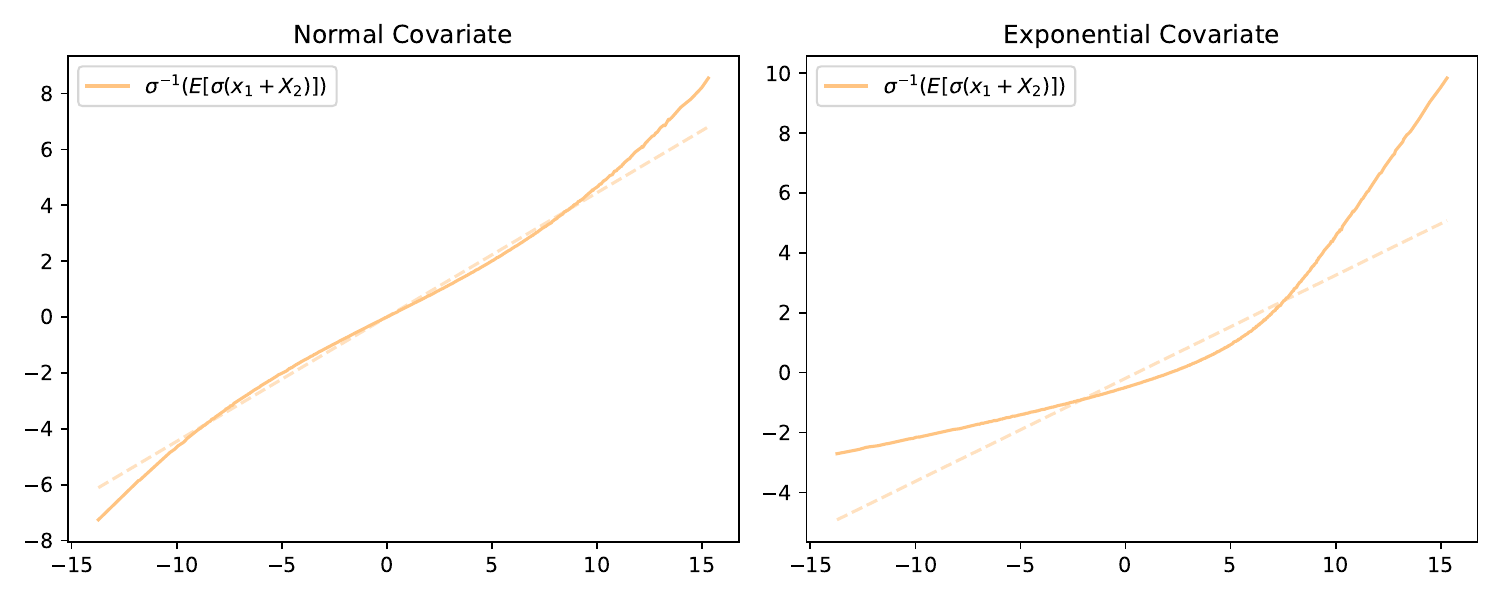}
\caption{Logit transformation of the Bayes probabilities from the illustration in \Cref{sec:illustration_th}, together with a linear approximation. We observe that the Bayes logits are not linear, confirming the theory from \citet{lobo2024primerlinearclassificationmissing}.}
\label{fig:logit_normal_exponential}
\end{figure*}

\end{document}